\documentclass[runningheads,envcountsect]{llncs}

\usepackage{eccv}

\usepackage{eccvabbrv}

\usepackage{graphicx}
\usepackage{booktabs}

\usepackage[accsupp]{axessibility}  

\usepackage{hyperref}

\usepackage{orcidlink}
\usepackage{algorithm}
\usepackage{algorithmic}
\usepackage{wrapfig}

\makeatletter
\renewcommand\paragraph{\@startsection{paragraph}{4}{\z@}%
  {0.6ex \@plus .5ex \@minus .2ex}%
  {-1em}%
  {\normalfont\normalsize\bfseries}}
\makeatother

\newcommand{\methodname}{\texttt{SVG-EAR}\xspace}

\begin{document}

\title{\texttt{SVG-EAR}: Parameter-Free Linear Compensation for Sparse Video Generation via Error-aware Routing} 
\titlerunning{SVG-EAR}

\newcommand{\equalcontrib}{\textsuperscript{\ensuremath{*}}}
\author{Xuanyi Zhou\equalcontrib \and
Qiuyang Mang\equalcontrib \and
Shuo Yang \and
Haocheng Xi \and \newline
Jintao Zhang \and
Huanzhi Mao \and
Joseph E. Gonzalez \and \newline
Kurt Keutzer \and
Ion Stoica \and
Alvin Cheung}

\authorrunning{X.~Zhou et al.}

\institute{UC Berkeley, USA\\
\equalcontrib Equal contribution.}

\maketitle

\begin{abstract}
  Diffusion Transformers (DiTs) have become a leading backbone for video generation, yet their quadratic attention cost remains a major bottleneck.
  Sparse attention reduces this cost by computing only a subset of attention blocks.
  However, prior methods often either drop the remaining blocks which incurs information loss, or rely on learned predictors to approximate them, introducing training overhead and potential output distribution shifting.
  In this paper, we show that the missing contributions can be recovered \emph{without training}: after semantic clustering, keys and values within each block exhibit strong similarity and can be well summarized by a small set of cluster centroids.
  Based on this observation, we introduce \methodname{}, a parameter-free linear compensation branch that uses the centroid to approximate skipped blocks and recover their contributions.
  While centroid compensation is accurate for most blocks, it can fail on a small subset.
  Standard sparsification typically selects blocks by attention scores, which indicate where the model places its attention mass, but not where the approximation error would be largest.
  \methodname{} therefore performs error-aware routing: a lightweight probe estimates the compensation error for each block, and we compute exactly the blocks with the highest error-to-cost ratio while compensating for skipped blocks. 
  We provide theoretical guarantees that relate attention reconstruction error to clustering quality, and empirically show that \methodname{} improves the quality-efficiency trade-off and increases throughput at the same generation fidelity on video diffusion tasks.
  Overall, \methodname{} establishes a clear Pareto frontier over prior approaches, achieving up to $1.77\times$ and $1.93\times$ speedups while maintaining PSNRs of up to $29.759$ and $31.043$ on Wan2.2 and HunyuanVideo, respectively.
  \keywords{Video Generation \and Sparse Attention \and Video Diffusion Model}
\end{abstract}
\section{Introduction}
\label{sec:intro}

\begin{figure}[h]
  \centering
    \includegraphics[width=1\textwidth]{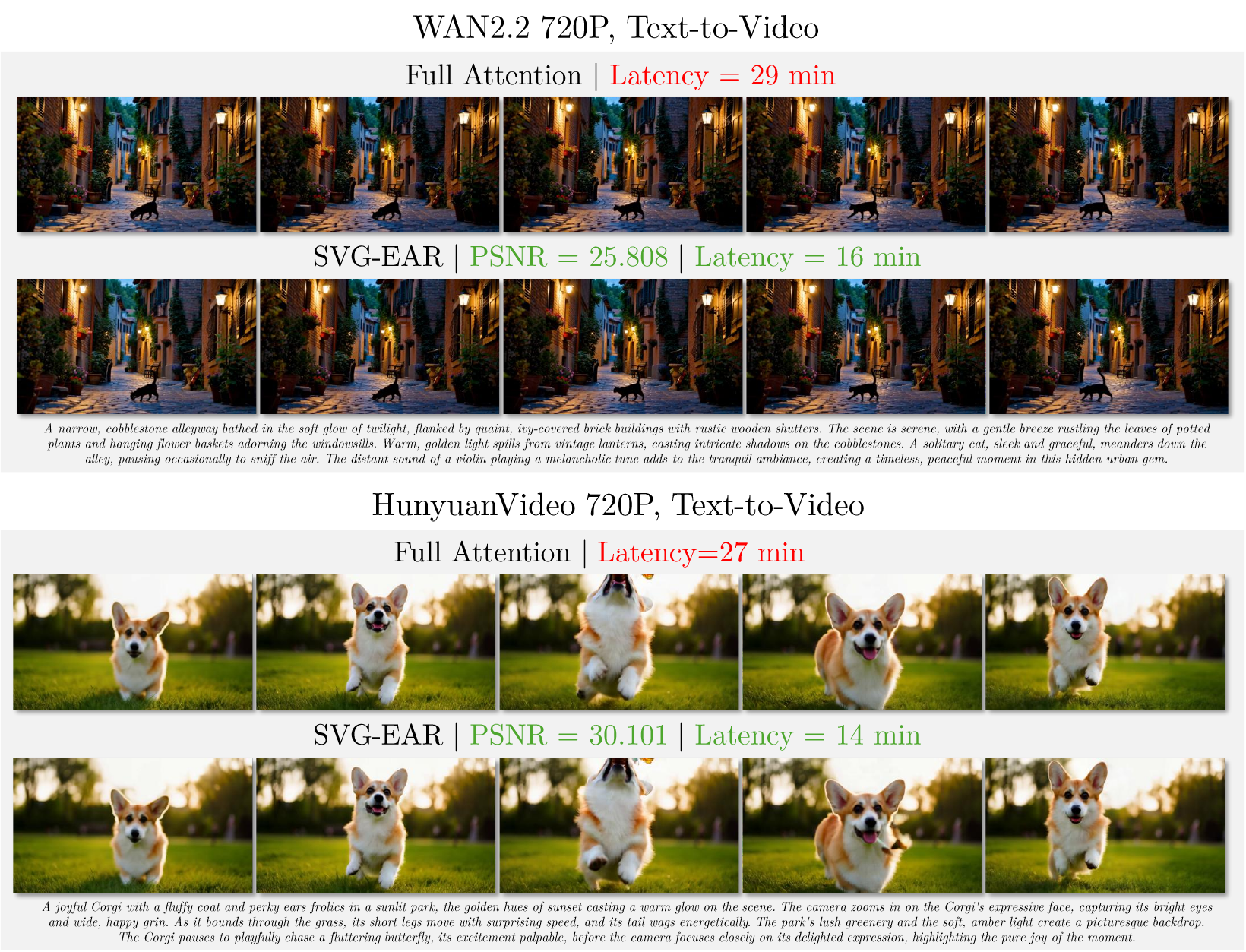}
  \caption{\methodname significantly accelerates video generation for Wan2.2 and HunyuanVideo. On a single NVIDIA H100 GPU, achieving $1.81\times$ and $1.93\times$ speedups with $26$ and $30$ PSNR, respectively. }
  \label{figure:teaser}
\end{figure}

Diffusion transformers (DiTs) have become a dominant paradigm for high-fidelity image and video generation~\cite{kong2024hunyuanvideo, wan2025wan}.
However, in video generation settings, the token sequence length grows rapidly with resolution and frame count, while the quadratic cost of attention quickly becomes a primary bottleneck.
As a result, accelerating attention \emph{without} noticeable quality degradation is increasingly critical for long-form and high-resolution video diffusion~\cite{xi2025sparsevideogenacceleratingvideo, yang2025sparse, zhang2025spargeattention}.

A large body of work tackles this bottleneck via sparse attention.
A particularly practical family exploits \emph{structured redundancy} in attention maps:
tokens are semantically clustered and permuted so that similar tokens become contiguous in memory, turning the attention matrix into a block structure over query clusters and key clusters.
Then, only a subset of query--key blocks are computed exactly (block-sparse), while the rest are ignored or approximated, yielding hardware-friendly sparsity patterns and substantial speedups~\cite{yang2025sparse}.
This ``cluster--permute--route'' pipeline is appealing in video DiTs because it aligns well with efficient kernels and preserves important structure.

Despite these advances, existing methods face a fundamental tension in block selection and the treatment of unselected blocks.
Many approaches select blocks by approximated attention scores (\emph{e.g.,} top-$k$/top-$p$) and \emph{drop} low-score blocks entirely.
While intuitive, low-score blocks can still collectively carry important global context (\emph{e.g.,} background consistency, long-range semantic coupling, and weak but numerous dependencies).
Consequently, naively discarding low-score blocks incurs non-trivial information loss and can manifest as perceptible quality degradation in video generation.
Recent works such as SLA~\cite{zhang2025sla} and SLA2~\cite{zhang2026sla2} address this by pairing sparse exact attention with a learned linear branch that approximates the contributions of dropped blocks, recovering much of the lost context.
However, this approach introduces additional trainable parameters and requires fine-tuning, limiting its plug-and-play applicability and sacrificing the fidelity.

A natural solution is to avoid pure dropping by leveraging the high within-block similarity revealed by clustering.
For blocks that are not selected for exact computation, one can apply a \emph{parameter-free linear compensation} using centroids:
replace queries and keys within a cluster with their centroid and approximate the contribution of an entire block with a shared interaction.
This branch requires no training and no additional parameters, and it mitigates information loss from dropped blocks.

However, linear compensation alone does not resolve the core issue.
Crucially, once a compensation branch exists, conventional score-based routing becomes \emph{misaligned} with the objective of controlling the final approximation error.
A high-score block may be highly coherent within the cluster and thus well approximated by its centroid, making exact computation unnecessary.
Conversely, a low-score block can contain diverse key-value interactions where centroid-based compensation induces substantial error.
Therefore, under a fixed compute budget, the goal should not be to preserve the highest-score blocks, but to \emph{minimize the reconstruction error} between full attention and its compensated counterpart, allocating exact computation to the blocks where compensation fails.

Motivated by this insight, we propose \methodname{}, a sparse attention method that combines \emph{parameter-free linear compensation} with \emph{error-aware routing}.
\\ \methodname{} first clusters queries and keys to form a block structure based on similarity.
It then computes exact attention for a subset of blocks under the density budget assigned by users, and approximates the remaining blocks using linear compensation with key/value cluster centroids.
Unlike prior score-based selection, \methodname{} estimates the \emph{compensation error} of each block using a lightweight probing procedure and greedily selects blocks with the highest \emph{error-to-cost ratio} (\emph{i.e.,} the predicted compensation error normalized by the block size) under a density budget. 
To make routing practical at inference time, we exploit intra-cluster similarity and use query centroids as proxies for individual queries, reducing the estimation cost from quadratic $\mathcal{O}(N_q N_k d)$ to near-linear $\mathcal{O}(C_q N_k d)$.
We further implement a fused, streaming kernel to avoid materializing intermediate logits and to keep the routing overhead negligible in practice.

We validate the proposed design both theoretically and empirically.
On the theory side, we provide an upper bound that relates the true attention-map error to our estimated error, characterizing its dependence on clustering quality and sequence length.
On the empirical side, \methodname{} achieves a superior error--density trade-off on video generation: at the same density, it reduces attention reconstruction error and improves generation quality, and at the same quality target it operates at lower density and delivers higher throughput (see \S\ref{sec:experiments}).
Specifically, \methodname{} establishes a clear Pareto frontier over prior approaches~\cite{yang2025sparse, xi2025sparsevideogenacceleratingvideo,zhang2025spargeattention}, reaching up to $1.77\times$ and $1.93\times$ speedups while maintaining PSNRs of up to $29.759$ (Wan2.2~\cite{wan2025wan}) and $31.043$ (HunyuanVideo~\cite{kong2024hunyuanvideo}), respectively.
Overall, our results suggest that when compensation is available, the key to high-fidelity sparse attention is not ``\emph{selecting high-score blocks}'', but identifying where compensation breaks and prioritizing those blocks for exact computation.

We summarize our key contributions as follows:
\begin{enumerate}
    \item We identify two fundamental misalignments in score-based sparse attention:
    (i) naively \emph{dropping} low-attention-score blocks can cause substantial information loss; and (ii) once an approximation/compensation branch is introduced, block selection should \emph{not} prioritize high-score blocks, but instead prioritize blocks that would otherwise incur \emph{large approximation error}.
    \item We prototype and implement a compensation-and-routing sparse attention mechanism: a parameter-free \emph{linear compensation} branch that recovers contributions from uncomputed blocks via cluster means, and an \emph{error-aware routing} strategy that, under a fixed budget, identifies and computes the blocks with the largest induced error, yielding a markedly improved error--density trade-off.
    \item We further translate the design into an end-to-end system with efficient kernels and execution flow that keep the overhead negligible in practice, delivering substantial and consistent speedups on real video generation workloads while maintaining generation fidelity.
\end{enumerate}

\section{Related Work}
\label{sec:related}

\paragraph{Sparse Attention for Video Generation.}
Sparse attention is a leading direction for accelerating video Diffusion Transformers~\cite{zhangefficient} because 3D spatiotemporal self-attention scales quadratically with the huge token sequence length. Sparse attention mechanisms for video generation can be roughly divided into static and dynamic approaches, based on whether the sparsity pattern is fixed offline or determined online at inference time. Static approaches typically exploit recurring structural patterns~\cite{xi2025sparsevideogenacceleratingvideo, li2025radialattentiononlogn, zhao2025paroattention, jiang2024miference, chen2025rettention, zhang2025ditfastattnv2}. Dynamic approaches infer masks or critical-token sets per sample and timestep, commonly introducing an identification/proxy-scoring step whose overhead is usually designed to be negligible~\cite{yang2025sparse, xia2025trainingfreeadaptivesparseattention, chen2025rainfusion2, liu2026mixture, sun2025vorta, wu2025vmoba, cai2025mixture, zhang2025spargeattention, li2026pisa, li2025psa}. Beyond training-free methods, newer ``trainable sparse attention'' lines push sparsity higher with fine-tuning and additional branch compensation \cite{zhang2025vsa, zhang2025sla, zhang2026sla2, zhang2026spargeattention2, wu2025usv, liang2026vmonarch, agarwal2026monarchrt}.

\paragraph{Linear Attention for Diffusion Models.}
Linear attention and state-space alternatives are important for video diffusion because quadratic attention quickly dominates latency and memory as spatiotemporal token counts grow. Representative families can be grouped into kernelized linear attention~\cite{chen2025sana}, state-space/SSM-centered designs that use structured recurrence for global mixing~\cite{gao2024matten, wang2025lingen, huang2025m4v}, and hybrid local–global designs that explicitly balance a cheap linear surrogate with selective higher-fidelity pathways~\cite{fang2026salad, zhang2025sla}. Though these methods enable predictable memory scaling and improved feasibility of long-context generation under fixed budgets, they still suffer from low quality and long-term dependencies.

\paragraph{Other Optimization Techniques.}
Cache-based acceleration exploits redundancy across denoising steps or between conditional/unconditional branches in classifier-free guidance~\cite{ma2024deepcache, ma2024learning, lv2024fastercache, kahatapitiya2025adaptive, ma2025magcache, chu2025omnicache, bu2025dicache, liu2025timestep}. Parallelization becomes essential when targeting deployment-scale throughput or higher resolutions~\cite{li2024distrifusion, feng2025streamdiffusionv2, fang2024pipefusion, fang2024xdit, jacobs2023deepspeed, liu2023ring, chu2025usp}. Quantization reduces memory bandwidth and leverages low-precision tensor cores to accelerate further the linear module~\cite{wu2024ptq4dit, li2025dvd, li2024svdquant} or attention module~\cite{zhang2025sageattention, zhang2024sageattention2, zhang2025sageattention2++, zhang2025sageattention3, zhang2026sagebwd}.
Distillation and few-step sampling compress long diffusion trajectories into a small number of steps, usually leading to more than $10\times$ speedup~\cite{lu2025simplifyingstabilizingscalingcontinuoustime,yin2024one,yin2024improved,zheng2025large}.

\paragraph{Autoregressive and Streaming Video Generation.} Autoregressive (streaming) video diffusion generates frames (or chunks) sequentially and enables efficient KV caching, usually accompanied by unbounded temporal horizons and interactive control. CausVid~\cite{yin2025slow} and Self-Forcing~\cite{huang2025self} exemplify this shift by converting bidirectional video diffusion into causal generation that better matches interactive inference and is consolidated through further algorithmic innovations to mitigate anti-drifting~\cite{liu2025rolling,chen2026context,zhu2026causal,lv2026light,yang2025longlive}. World Models~\cite{xi2026quant,hyworld2025,worldplay2025,hunyuanworld2025tencent,bruce2024genie,gao2025longvie} objectives overlap with streaming video generation but extend it toward closed-loop simulation, where an agent or user can intervene at each step.

\section{Motivation}
\label{sec:motivation}

\begin{figure*}[t]
    \centering
    \includegraphics[width=0.9\textwidth]{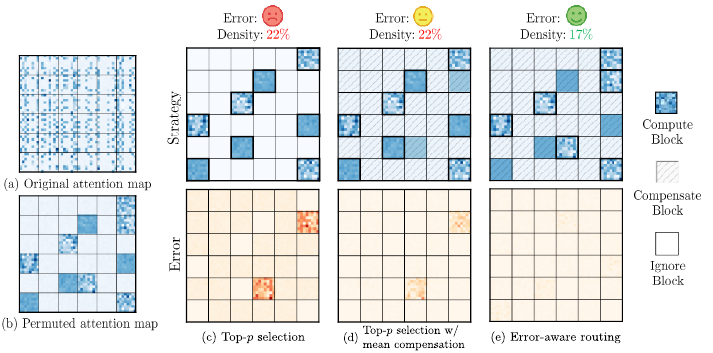}
    \caption{Existing methods and top-$p$ selection fall short: \textit{dropping "low-score" blocks} and \textit{error-unaware block selection} degrade the error–density trade-off. (a) Original attention map.
    (b) Permuted map after semantic-aware clustering.
    (c) Ignoring low-score blocks causes a large, sparse attention error.
    (d) Linear compensation with cluster means still yields high error due to naive top-$p$ selection.
    (e) Our method improves both error and density by routing based on the gap between full computation and compensation.}
    \label{fig:motivation}
\end{figure*}

\subsection{Structured Redundancy in Attention Computation}
Previous works exploit the inherent sparsity in attentions to accelerate DiTs~\cite{xi2025sparsevideogenacceleratingvideo, yang2025sparse, zhang2025spargeattention, zhao2025paroattention}.
We first cluster semantically similar tokens and permute them so that tokens within each cluster are laid out contiguously.
This allows the attention map to be partitioned into blocks that often exhibit high inner-block similarity.
As shown in \cref{fig:motivation}~(c), the existing methods then select a subset of these blocks for exact attention ranked by their approximated attention scores, while simply ignoring the low score blocks.
However, this approach can lead to significant information loss, as the ignored blocks can also contain important contextual information, and the limited budget may not allow for their inclusion.

Fortunately, the similarities within each block naturally provide an opportunity to efficiently approximate the skipped blocks' contributions without any training and additional parameters.
As shown in \cref{fig:motivation}~(d), a simple strategy here is to use the mean of the remaining blocks in each cluster to fill in the gaps left by the ignored blocks.
Given a block to compensate, suppose the mean of the key-value tokens is $\bar{k}$.
The attention logits for all entries in the block interacted with a query $q_i$ can be then computed as $q_i\bar{k}^{T}/\sqrt{d}$, where $d$ is the dimensionality of the token vectors.

\subsection{Existing Block Selection Fails to Control Output Error}
While linear compensation improves the error–density trade-off, its effectiveness ultimately depends on how the blocks are selected.
Existing block selection strategies are score-based and thus misaligned with the objective of controlling the final output error when linear compensation is introduced.
For example, a block with high attention scores may exhibit strong inner-block similarity, making it well suited for linear compensation.
In contrast, a block with low attention scores can contain diverse key–value interactions, where linear compensation induces substantial approximation error.
To maximize fidelity under a fixed compute budget, the goal should be to minimize the reconstruction error of the attention map.
As shown in \cref{fig:motivation}~(e), our proposed approach routes computation based on the gap between the full attention map and its linearly compensated counterpart.
As a result, by directly optimizing an error-aware objective and prioritizing the blocks that contribute most to the reconstruction error, we achieve a superior error–density trade-off, as detailed below.

\section{Methodology}
\label{sec:methodology}

In this section, we first formulate the problem based on our parameter-free linear compensation.
We then introduce \methodname, an error-aware routing method with theoretical guarantees and low overhead.
Our key insight is that the error between the full attention map and its linearly compensated counterpart can be accurately estimated via a lightweight probing algorithm, which enables precise attention routing.
We further show how the linear compensation leverages value-cluster means to produce the final output efficiently.
Finally, we implement a fused kernel that estimates the error using a streaming update.

\subsection{Problem Formulation}
Given a transformer attention layer with query tokens $Q \in \mathbb{R}^{N_q \times d}$ and key tokens $K \in \mathbb{R}^{N_k \times d}$. 
We cluster $Q$ and $K$ using flash k-means~\cite{yang2025sparse}, and obtain $C_q$ and $C_k$ clusters, respectively.
For each query $q_i$ and key $k_j$, we use $\bar{q}_i$ and $\bar{k}_j$ to denote the mean token in the cluster that contains $q_i$ and $k_j$, respectively.
We denote $\bar{Q} \in \mathbb{R}^{N_q \times d}$ and $\bar{K} \in \mathbb{R}^{N_k \times d}$ as the matrices of per-token cluster means for $Q$ and $K$, respectively; that is, the $i$-th row of $\bar{Q}$ equals $\bar{q}_i$ (the centroid of the cluster containing $q_i$), and similarly the $j$-th row of $\bar{K}$ equals $\bar{k}_j$.

\begin{figure*}[t]
    \centering
    \includegraphics[width=1\textwidth]{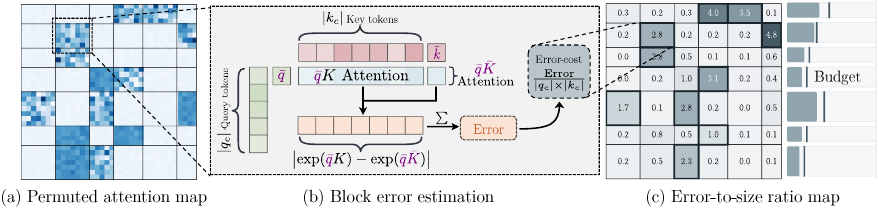}
    \caption{Overview of \methodname (a) The attention map after semantic-aware clustering.
    (b) Block error estimation. Using the cluster mean as a proxy for individual queries, the block error is computed via the sum of squared differences between exponentiated logits of individual keys $k_j$ and the key mean $\bar{k}$, then normalized by the block area to determine the error-to-size ratio.
    (c) Blocks with the highest error-to-size ratio are greedily selected for exact attention within the budget, while the rest are assigned to linear compensation.}
    \label{fig:methodology}
\end{figure*}

Our core idea here is to approximate part of the attention computation linearly with cluster means.
For a query $q_i$, when approximating its interaction with a cluster of keys $k_c = \{k_j\}$, we replace each key $k_j$ in the logit computation with the corresponding cluster mean $\bar{k}_j$. 
This gives an approximate logit ${q_i \bar{k}_j^T} / {\sqrt{d}}$, which is shared across all keys within the same cluster.

We use a binary routing mask $M \in \{0,1\}^{N_q \times N_k}$ for each query $q_i$.
If $M^{(i)}_j=1$, we compute the interaction between $q_i$ and $k_j$ exactly.
If $M^{(i)}_j=0$, we approximate it using the cluster mean.
To match ML accelerator-friendly sparsity, where the tokens in the same cluster will be permutated continuously and processed together, the mask should be applied on the query-key block level:
All queries in the same cluster share the same mask.
For a fixed query, all keys in the same cluster share the same mask value.
With this mask, we write the sparse attention map as
\begin{equation}
    \tilde{A}_{\text{sparse}}^{(M)} = \operatorname{softmax}\Big(\frac{Q K^T}{\sqrt{d}}\odot M + \frac{Q \bar{K}^T}{\sqrt{d}}\odot\big(1-M\big)\Big)
\end{equation}

Given a budget $\rho$ (\emph{i.e.,} density), our goal is to find a mask $M$ that minimizes the mean squared error (MSE) between the full attention map and the sparse attention map, subject to the constraint that the number of non-zero entries in $M$ is at most $\rho N_q N_k$.
Formally, we formulate the optimization problem as:
\begin{equation}
    \min_{M} \|\tilde{A}_{\text{sparse}}^{(M)} - A_{\text{full}}\|^2_F \quad \text{s.t.} \quad \|M\|_F^2 \leq \rho N_q N_k,
    \label{eq:optimization_problem}
\end{equation}
where $\|\cdot\|_F$ denotes the Frobenius norm and $A_{\text{full}} = \operatorname{softmax}\Big(\frac{Q K^T}{\sqrt{d}}\Big)$ is the full attention map.

\subsection{Error-Aware Routing}
\paragraph{Relaxed optimization problem.}
The above optimization problem is difficult to solve efficiently at routing time because the softmax normalizer couples all keys for each query, and $A_{\text{full}}$ and $\tilde{A}_{\text{sparse}}^{(M)}$ use different normalization terms. In particular, for any query, altering the routing decision for any single key cluster changes the normalizer and thus changes the attention weights assigned to all other keys.
Consequently, selecting an optimal block-level mask cannot be reduced to independent per-block choices; in the worst case it requires searching over a combinatorial space of size up to $\mathcal{O}(2^{C_q \cdot C_k})$, which is intractable.

To obtain a practical and block-separable proxy, we therefore relax the objective by focusing on the exponentiated logits as follows:
\begin{equation}
    \min_{M} \sum_{i, j} \big( 1 - M^{(i)}_j \big) \big( \exp(\frac{q_i \bar{k}_j^T}{\sqrt{d}}) - \exp(\frac{q_i k_j^T}{\sqrt{d}}) \big)^2 \quad \text{s.t.} \quad \|M\|_F^2 \leq \rho N_q N_k.
\end{equation}

Intuitively, this relaxation shifts the objective from finding a mask $M$ that minimizes the discrepancy between two normalized attention maps to minimizing the discrepancy between their unnormalized attention weights.
Although this relaxation may fail to identify masks whose near-optimality only emerges after normalization, it still provides a useful proxy for routing decisions.
This is because, in our setting, the variation in the softmax normalizer is practically limited for two reasons:
(1) semantic clustering ensures that each $k_j$ is close to its centroid $\bar{k}_j$, reducing deviation in the logits;
(2) the unmasked tokens typically receive high attention scores, so they dominate the normalization term, thereby limiting the impact of the remaining masked tokens on the normalizer.

\paragraph{Oracle solution.}
For the relaxed optimization problem, an oracle solution can in principle be obtained by selecting the query--key blocks that incur the largest squared errors under the budget constraint, which reduces to a tractable $0$--$1$ knapsack problem.
Specifically, let
\begin{equation}
    \epsilon_{i,j}^2 = \big( \exp\big(\frac{q_i \bar{k}_j^T}{\sqrt{d}}\big) - \exp\big(\frac{q_i k_j^T}{\sqrt{d}}\big) \big)^2
\end{equation}
denote the squared error for the entry $(i,j)$.
Each query-key block $(q_c, k_c)$ can then be treated as an item in the knapsack formulation, where the value is given by
$\sum_{(i,j)\in(q_c,k_c)} \epsilon_{i,j}^2$,
and the weight corresponds to its size $|q_c| |k_c|$.
After that, the problem can be solved using standard knapsack algorithms or well approximated by greedy methods that select blocks in descending order of error-to-cost ratio $\frac{\sum_{(i,j)\in(q_c,k_c)} \epsilon_{i,j}^2}{|q_c| |k_c|}$ until the budget is exhausted.

The oracle solution is fully error-aware; however, it is not practical. Computing the exact per-entry error requires evaluating the full attention logits for all token pairs, which incurs the same $\mathcal{O}(N_qN_kd)$ quadratic cost as full attention itself. In other words, the oracle depends on precisely the information that sparse attention is designed to avoid computing. 

\paragraph{Error estimation.}
To achieve error-aware routing without incurring the prohibitive cost of computing the exact error, we propose to estimate the error using a lightweight probing algorithm.
Again, we leverage the inherent similarity within each query token cluster, which allows us to use the cluster mean as a proxy for the individual queries in the error estimation.
Specifically, we define the following estimated error for each entry $(i,j)$:
\begin{equation}
    \label{eq:estimated_error}
    \hat{\epsilon}_{i,j}^2 = \big( \exp\big(\frac{\bar{q}_i \bar{k}_j^T}{\sqrt{d}}\big) - \exp\big(\frac{\bar{q}_i k_j^T}{\sqrt{d}}\big) \big)^2
\end{equation}
Because queries within the same cluster share a common estimated error, the computational complexity reduces to $\mathcal{O}(C_q N_k d)$, which is negligible in practice when an optimized kernel implementation is used.

We then use the estimated errors to determine whether each query–key block is assigned to exact attention or linear compensation.
Specifically, we first aggregate the estimated errors within each block. We then greedily select the blocks with the highest error-to-size ratio $ {\sum_{(i,j)\in(q_c,k_c)} \hat{\epsilon}_{i,j}^2} / {|q_c| |k_c|} $ for exact attention until the budget is exhausted, while assigning the remaining blocks to linear compensation. 

\begin{proposition}
Let 
$\delta_q^2 = \frac{1}{N_q} \sum_i \| q_i - \bar{q}_i \|_2^2$
denote the average squared $\ell_2$ error between each query and its cluster mean, and let $K_{\max}$ denote the maximum $\ell_2$ norm of the key tokens. 
Given any mask $M$ that does not significantly perturb the attention normalizers, we have:
\begin{equation}
    \label{eq:mse_bound}
    \frac{1}{N_qN_k} \|\tilde{A}_{\text{sparse}}^{(M)} - A_{\text{full}}\|^2_F \;\leq\; \frac{2}{N_qN_k} \sum_{i,j} (1-M^{(i)}_j)\, \frac{\hat{\epsilon}_{i,j}^2}{Z_i^2} \;+\; \frac{8\delta_q^2K_{\max}^2}{N_k d}
\end{equation},
where $Z_i$ denotes the softmax normalizer for query $i$.
\end{proposition}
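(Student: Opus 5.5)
The plan is to read the hypothesis ``$M$ does not significantly perturb the attention normalizers'' as licensing a common normalizer $Z_i$ for row $i$ of both $A_{\text{full}}$ and $\tilde{A}_{\text{sparse}}^{(M)}$. Under that reading the $(i,j)$ entry of the difference is exactly zero when $M^{(i)}_j=1$. When $M^{(i)}_j=0$ it equals
\[
\frac{1}{Z_i}\Big(\exp\big(\tfrac{q_i\bar k_j^T}{\sqrt d}\big)-\exp\big(\tfrac{q_i k_j^T}{\sqrt d}\big)\Big)=\frac{\epsilon_{i,j}}{Z_i},
\]
so the left-hand side equals $\frac{1}{N_qN_k}\sum_{i,j}(1-M^{(i)}_j)\,\epsilon_{i,j}^2/Z_i^2$. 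All remaining work is to replace the true per-entry error $\epsilon_{i,j}$ by the probed estimate $\hat\epsilon_{i,j}$ from \eqref{eq:estimated_error}, at the cost of the additive $\delta_q^2$ term.

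For that replacement I would write $\epsilon_{i,j}=\hat\epsilon_{i,j}+(\epsilon_{i,j}-\hat\epsilon_{i,j})$ and apply $(a+b)^2\le 2a^2+2b^2$. This produces the leading term $\frac{2}{N_qN_k}\sum(1-M)\hat\epsilon_{i,j}^2/Z_i^2$ exactly as stated, plus a remainder $\frac{2}{N_qN_k}\sum_{i,j}(\epsilon_{i,j}-\hat\epsilon_{i,j})^2/Z_i^2$, where I drop the factor $(1-M)\le 1$. Expanding the difference of differences, $\epsilon_{i,j}-\hat\epsilon_{i,j}$ is
\[
\big[e^{q_i\bar k_j^T/\sqrt d}-e^{\bar q_i\bar k_j^T/\sqrt d}\big]-\big[e^{q_i k_j^T/\sqrt d}-e^{\bar q_i k_j^T/\sqrt d}\big].
\]
Applying $(a-b)^2\le 2a^2+2b^2$ once more gives two terms of the form $\big(e^{q_i x^T/\sqrt d}-e^{\bar q_i x^T/\sqrt d}\big)^2/Z_i^2$, with $x=\bar k_j$ or $x=k_j$, both of norm at most $K_{\max}$ (the centroid norm is at most $K_{\max}$ by convexity). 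Together these account for the constant $2\cdot2\cdot2=8$.

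The key step is to bound each such term by $\|q_i-\bar q_i\|^2K_{\max}^2/d$ times a factor that the $1/Z_i^2$ absorbs. By the mean value theorem, $|e^a-e^b|\le e^{\max(a,b)}|a-b|$, and Cauchy--Schwarz gives $|a-b|\le \|q_i-\bar q_i\|K_{\max}/\sqrt d$. The leftover factor is then $e^{2\max(a,b)}/Z_i^2$. Since $e^{q_ix^T/\sqrt d}/Z_i$ is (approximately) a softmax probability, it is at most $1$. The $\bar q_i$ version is also at most $1$ under the same ``unperturbed normalizer'' convention, reading the hypothesis as saying that $Z_i$ dominates each individual exponentiated logit for both $q_i$ and $\bar q_i$. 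I expect this step to be the main obstacle: the bound is only rigorous if one states explicitly the assumption that $e^{\bar q_i x^T/\sqrt d}\le Z_i$, and I would fold that into the normalizer hypothesis. A first attempt would be to use convexity, bounding $e^{\max}$ by the sum of the two exponentials, but the cleanest route is simply to assume each is at most $Z_i$.

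Summing, each term is at most $\|q_i-\bar q_i\|^2K_{\max}^2/d$, independent of $j$. The $j$-sum then gives a factor $N_k$ and the $i$-sum gives $N_q\delta_q^2$, so the remainder is at most $\frac{8}{N_qN_k}\cdot N_k\cdot N_q\delta_q^2K_{\max}^2/d=8\delta_q^2K_{\max}^2/d$. This matches the stated $\frac{8\delta_q^2K_{\max}^2}{N_kd}$ only up to the factor $1/N_k$. To recover that factor I would exploit that the row sums of $e^{\cdot}/Z_i$ are at most $1$. Concretely, I would bound $\sum_j e^{2\max}/Z_i^2\le \big(\sum_j e^{\max}/Z_i\big)^2\le 4$ instead of termwise by $1$, then absorb the resulting constant into the $8$. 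This is a second delicate point in the bookkeeping of constants, and I would adjust the $(a+b)^2$ splits accordingly.
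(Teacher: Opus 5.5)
Your skeleton matches the paper's. The common normalizer turns the left-hand side into $\frac{1}{N_qN_k}\sum_{i,j}(1-M^{(i)}_j)\epsilon_{i,j}^2/Z_i^2$. The split $\epsilon_{i,j}=\hat\epsilon_{i,j}+(\epsilon_{i,j}-\hat\epsilon_{i,j})$ produces the leading term with its factor $2$. The $1/N_k$ comes from bounding a $j$-sum of squared pseudo-probabilities by a constant.

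The gap is the constant in the residual, and you cannot ``absorb'' it into the $8$. The $8$ is part of the claim, and the leading term must keep coefficient $2$. In the weighted split $(a+b)^2\le(1+\eta)a^2+(1+1/\eta)b^2$ this forces $\eta\le 1$, so the residual coefficient is at least $2$ and there is nothing to rebalance. To match the statement, each of your two terms must contribute at most $\|q_i-\bar q_i\|_2^2K_{\max}^2/d$ per row after summing over $j$. Your bound $\sum_j e^{2\max(a_j,b_j)}/Z_i^2\le\big(\sum_j e^{\max(a_j,b_j)}/Z_i\big)^2\le 4$ gives four times that. What you actually prove is a residual of $32\,\delta_q^2K_{\max}^2/(N_kd)$. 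Even the sharper $e^{2\max(a,b)}\le e^{2a}+e^{2b}$ together with $\sum_j p_j^2\le 1$ only brings it down to $16$.

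The loss comes from the endpoint mean value bound itself. $e^{\max(a_j,b_j)}$ picks, key by key, the larger of $q_i$'s and $\bar q_i$'s weight, so it is not the softmax weight of any single query. Its $j$-sum can only be bounded by $2$, not $1$.

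There are two ways to close this. Within your own decomposition, replace the mean value bound by the convexity (Hermite--Hadamard) estimate $|e^a-e^b|=|a-b|\int_0^1 e^{(1-t)a+tb}\,dt\le |a-b|\,\frac{e^a+e^b}{2}$. With $a_j=q_i\bar k_j^T/\sqrt d$ and $b_j=\bar q_i\bar k_j^T/\sqrt d$ this gives
\[
\sum_j\frac{(e^{a_j}-e^{b_j})^2}{Z_i^2}\le\frac{\|q_i-\bar q_i\|_2^2K_{\max}^2}{d}\Big(\sum_j\frac{e^{a_j}+e^{b_j}}{2Z_i}\Big)^2\le\frac{\|q_i-\bar q_i\|_2^2K_{\max}^2}{d},
\]
and likewise for the $k_j$ term. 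The residual then becomes $2\cdot2\cdot(1+1)=8$ times $\|q_i-\bar q_i\|_2^2K_{\max}^2/d$ per row, i.e.\ $8\delta_q^2K_{\max}^2/(N_kd)$ after averaging. This needs normalizer control only at the endpoints $q_i,\bar q_i$, which is weaker than the paper's hypothesis.

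The paper instead does not split $\epsilon_{i,j}-\hat\epsilon_{i,j}$:
\begin{itemize}
\item it treats $f_j(q)=e^{qk_j^T/\sqrt d}-e^{q\bar k_j^T/\sqrt d}$ as one function and writes $f_j(q_i)-f_j(\bar q_i)$ as an integral of $\nabla f_j$ along $q(t)=(1-t)q_i+t\bar q_i$;
\item it applies Cauchy--Schwarz in $t$ with $\|\nabla f_j(q(t))\|_2\le\frac{K_{\max}}{\sqrt d}\big(e^{q(t)k_j^T/\sqrt d}+e^{q(t)\bar k_j^T/\sqrt d}\big)$;
\item it swaps the $j$-sum with the $t$-integral. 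At each fixed $t$ the weights $P_j(t),\bar P_j(t)$ are pseudo-probabilities of the single query $q(t)$, so $\sum_j(P_j(t)+\bar P_j(t))^2\le 4$ and the residual is $2\cdot 4=8$ directly.
\end{itemize}
This is why the paper's stability hypothesis is phrased for interpolated queries.

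In either version, keep the factor $(1-M^{(i)}_j)$ in the residual rather than dropping it. The hypothesis only controls masked entries, so the row-sum bounds involving $\bar q_i$ and $\bar k_j$ hold over masked $j$, not over all keys.
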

See Appendix~\ref{app:bound} for the proof and the formal statement of the normalizer stability assumption.
The left-hand side is the true MSE on the attention map, while the right-hand side consists of the estimated MSE (using $\hat{\epsilon}_{i,j}^2$) scaled by a constant factor, plus a residual term.
The residual $\frac{8\delta_q^2 K_{\max}^2}{N_k d}$ scales linearly with the clustering error $\delta_q^2$ and decreases inversely with the sequence length $N_k$.
Therefore, as clustering improves (\emph{i.e.,} $\delta_q^2 \to 0$) or the sequence length increases, the bound becomes asymptotically tight, establishing that our error estimation is both theoretically safe and quantitatively controllable.


\subsection{Linear Compensation}
After determining the mask $M$, we compute the final attention output by combining exact and approximated contributions.
Let $V \in \mathbb{R}^{N_k \times d}$ denote the value tokens, which follow the same clustering assignment as $K$, and let $\bar{V} \in \mathbb{R}^{N_k \times d}$ denote the matrix of mean value tokens, where each $\bar{v}_j$ is the average of $V$ over the key cluster containing $v_j$.
The sparse attention output is then given by:
\begin{equation}
\tilde{O}_{\text{sparse}}^{(M)} = \big(\tilde{A}_{\text{sparse}}^{(M)} \odot M\big) V + \big(\tilde{A}_{\text{sparse}}^{(M)} \odot (1-M)\big) \bar{V}
\end{equation}
For blocks where $M^{(i)}_j=1$, the output is computed with the original values $V$.
For blocks where $M^{(i)}_j=0$, since all keys within the cluster share the same approximate attention weight, the contribution of a key cluster $k_c$ reduces to $|k_c| \cdot \tilde{A}_{\text{sparse}}^{(M)}(i, j) \cdot \bar{v}_j$.
This significantly improves computational efficiency: for a given query and masked block, 
we only need to compute a single logit using the cluster mean $\bar{k}_j$ 
and a single scalar–vector product with the mean value $\bar{v}_j$.
Hence, the overall time complexity of \methodname's attention computation is reduced to 
$\mathcal{O}\Big((N_q C_k + \rho N_k N_q)\cdot d\Big)$.

Moreover, since the linear compensation also considers the value tokens, we can further improve the error estimation above by incorporating each $v_j$ and $\bar{v}_j$ into the analysis.
A \textit{value-aware error estimation} is thus given by:
\begin{equation}
    \label{eq:estimated_error_value}
    \tilde{\epsilon}_{i,j}^2 = \| \exp\big(\frac{\bar{q}_i \bar{k}_j^T}{\sqrt{d}}\big)\bar{v}_j - \exp\big(\frac{\bar{q}_i k_j^T}{\sqrt{d}}\big)v_j \|^2_2
\end{equation}
which can be computed in the same complexity $\mathcal{O}(C_q N_k d)$ as Equation~\ref{eq:estimated_error} while providing a practically more accurate estimation.

\begin{figure}[t]
  \centering
    \includegraphics[width=1\linewidth]{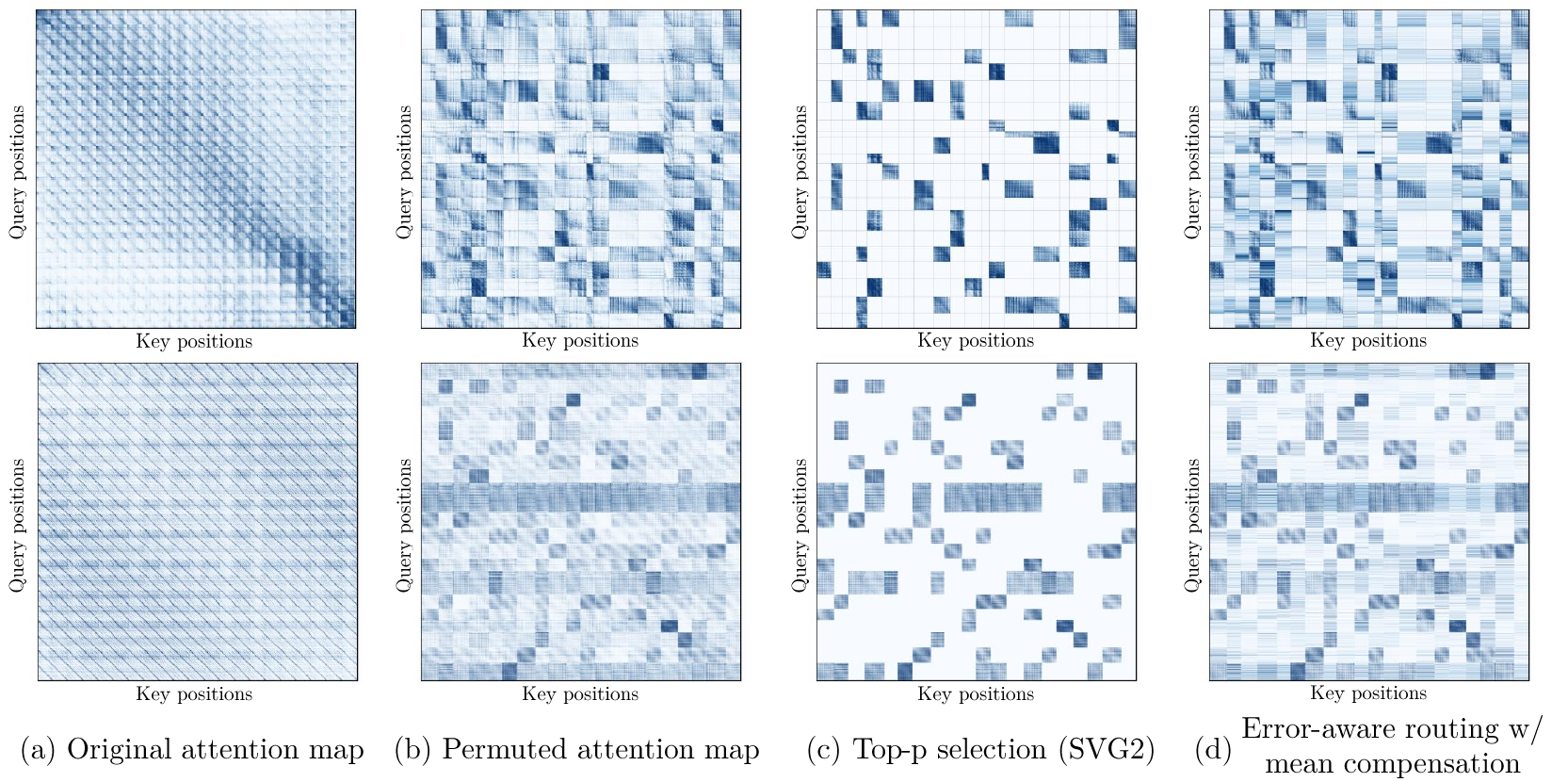}
  \caption{Visualization of attention maps from specific attention heads in Wan2.2 during text-to-video generation. (a) Original attention maps with sparse patterns, using heatmaps to represent attention weights. (b) Permuted attention maps. Following SVG2, we permute the attention maps such that attention begins to cluster into specific blocks. (c) Applying SVG2's top-$p$ selection to these permuted attention maps, which ignores the sparse attention mechanisms of the remaining blocks. (d) Applying our Error-aware Routing and Mean Compensation mechanisms to the permuted attention maps, achieving higher attention map similarity.}
  \label{figure:visualization}
\end{figure}

\subsection{Efficient Kernel Implementation}
The error estimation procedure is inherently memory I/O–bound, as naively materializing all logits requires repeatedly writing intermediate results to HBM. 
To address this bottleneck, we design a custom kernel based on a streaming update scheme.
Specifically, instead of materializing all logits in HBM and computing the squared errors afterward, we fuse these steps by expanding the squared terms and maintaining the required running statistics on the fly. 
To ensure numerical stability, we subtract the running maximum from the logits prior to exponentiation. 
This design reduces HBM accesses to $\mathcal{O}(N_k C_q d \cdot \mathbf{S}^{-1})$, 
where $\mathbf{S}$ denotes the SRAM size, while incurring only a small constant-factor increase in FLOPs within the same $\mathcal{O}(N_k C_q d)$ complexity.
For the linear compensation part, we reuse SVG2's~\cite{yang2025sparse} customized attention kernel that accepts dynamic block sizes as inputs. 
See Appendix~\ref{app:kernel} for more implementation details.

\section{Experiments}
\label{sec:experiments}

\begin{table*}[t]
\centering
\caption{Quality and efficiency benchmarking results of \methodname and baselines, where the best results are highlighted, and the second-best results are underlined.}
\label{table:main-results}
\renewcommand{\arraystretch}{1.5}
\resizebox{1\linewidth}{!}{%
\begin{tabular}{l|ccccc|ccc}
\toprule
\textbf{Config} & \textbf{PSNR}$\uparrow$ & \textbf{SSIM}$\uparrow$ & \textbf{LPIPS}$\downarrow$ & \textbf{ImgQual}$\uparrow$ & \textbf{SubCons}$\uparrow$ & \textbf{Density}$\downarrow$ & \textbf{FLOP}$\downarrow$ & \textbf{Speedup}$\uparrow$ \\
\midrule

\textit{Wan 2.2 14B, 720P, I2V} & - & - & - & 0.704 & 0.960 & 100\% & 658.46 PFLOPS & $1\times$ \\
\cmidrule(lr){1-9}
SpargeAttn             & 27.140 & 0.883 & 0.116 & \underline{0.703} & 0.958 & 30.15\% & 396.83 PFLOPS & $1.58\times$ \\
SVG                    & 25.297 & 0.844 & 0.139 & 0.703 & \underline{0.958} & 30.25\% & 397.20 PFLOPS & $1.58\times$ \\
SVG2                   & 27.668 & 0.888 & 0.117 & 0.701 & 0.958 & 29.38\% & 393.95 PFLOPS & $1.61\times$ \\
\rowcolor{lightblue}
\methodname     & \textbf{29.759} & \textbf{0.918} & \textbf{0.093} & \textbf{0.704} & \textbf{0.959} & \underline{23.64\%} & 378.88 PFLOPS & \underline{1.61$\times$} \\
\rowcolor{lightblue}
\methodname-Turbo & \underline{28.344} & \underline{0.900} & \underline{0.108} & 0.702 & 0.958 & \textbf{20.42\%} & 363.85 PFLOPS & \textbf{1.77$\times$} \\
\midrule

\textit{Wan 2.2 14B, 720P, T2V} & - & - & - & 0.706 & 0.916 & 100\% & 658.46 PFLOPS & $1\times$ \\
\cmidrule(lr){1-9}
SpargeAttn             & 20.872 & 0.708 & 0.242 & \underline{0.708} & \textbf{0.916} & 30.15\% & 396.83 PFLOPS & $1.58\times$ \\ 
SVG      & 19.455 & 0.654 & 0.292 & \textbf{0.712} & 0.912 & 30.25\% & 397.20 PFLOPS & $1.59\times$ \\
SVG2     & 23.556 & 0.802 & 0.183 & 0.705 & 0.914 & 32.30\% & 404.88 PFLOPS & $1.57\times$ \\
\rowcolor{lightblue}
\methodname     & \textbf{24.995} & \textbf{0.841} & \textbf{0.153} & 0.706 & \underline{0.915} & \underline{25.95\%} & 387.53 PFLOPS & \underline{$1.59\times$} \\
\rowcolor{lightblue}
\methodname-Turbo & \underline{23.940} & \underline{0.814} & \underline{0.174} & 0.705 & 0.915 & \textbf{22.25\%} & 370.71 PFLOPS & \textbf{1.75$\times$} \\
\midrule

\textit{Hunyuan 13B, 720P, T2V} & - & - & - & 0.665 & 0.904 & 100\% & 612.38 PFLOPS & $1\times$ \\
\cmidrule(lr){1-9}
SpargeAttn             & 24.589 & 0.796 & 0.232 & 0.629 & \textbf{0.908} & 40.09\% & 389.76 PFLOPS & $1.38\times$ \\
SVG      & 27.325 & 0.880 & 0.140 & \textbf{0.665} & \underline{0.905} & 29.92\% & 351.97 PFLOPS & $1.57 \times$ \\
SVG2      & \underline{29.445} & \underline{0.911} & \underline{0.112} & 0.654 & 0.901 & 26.21\% & 299.02 PFLOPS & \underline{1.89$\times$} \\
\rowcolor{lightblue}
\methodname      & \textbf{31.043} & \textbf{0.928} & \textbf{0.092} & \underline{0.659} & 0.903 & 22.17\% & 281.86 PFLOPS & \textbf{1.93$\times$} \\

\bottomrule
\end{tabular}
}
\end{table*}


\paragraph{Models.} We evaluate \methodname on open-sourced state-of-the-art video generation models, including Wan2.2-I2V/T2V-A14B~\cite{wan2025wan}, and HunyuanVideo-T2V-13B~\cite{kong2024hunyuanvideo} to generate videos with 720p resolution. After being tokenized by 3D-VAE, Wan2.2 generates 21 frames with 3,600 tokens per frame, while HunyuanVideo processes 33 frames with 3,600 tokens per frame.

\paragraph{Metrics.} 
We assess the similarity of the generated video compared to full attention using the following metrics: Peak Signal-to-Noise Ratio (PSNR), Learned Perceptual Image Patch Similarity (LPIPS), and Structural Similarity Index Measure (SSIM). We use VBench~\cite{huang2024vbench} to evaluate the video quality. To quantify the efficiency of sparse attention mechanisms (\emph{i.e.,} computational budget), we use density, which is defined as the sparse attention computation divided by the full attention computation. To assess end-to-end efficiency, we test the Speedup Ratio, which is calculated as the inference time of the full attention divided by the inference time of the sparse attention.

\paragraph{Datasets.} For text-to-video generation, we adopt the prompt in Penguin Benchmark after prompt optimization provided by the VBench team. For image-to-video generation, we adopt the prompt-image pairs provided by VBench with prompt enhancement. We evaluate the video quality on a subset of 50 samples randomly selected from text-to-video and image-to-video datasets, respectively. 

\paragraph{Baselines.} We compare \methodname against state-of-the-art sparse attention algorithms, including static method Sparse VideoGen (SVG)~\cite{xi2025sparsevideogenacceleratingvideo}, and dynamic methods Sparse VideoGen 2 (SVG2)~\cite{yang2025sparse} and SpargeAttention~\cite{zhang2025spargeattention}.
We do not include SLA~\cite{zhang2025sla,zhang2026sla2} because it requires extra training, which makes similarity metrics not comparable.
We provide the details of each method in Appendix \ref{app:config}.

\paragraph{Implementation.} We implement \methodname on top of SVG2~\cite{yang2025sparse}, inheriting its budget allocation strategy and default hyperparameters. Since SVG2 assigns an adaptive compute budget to each query cluster, our routing algorithm operates at the granularity of individual query clusters accordingly.
By reducing the number of centroids and the density budgets, we developed \methodname-turbo that matches the efficiency of SVG2-turbo while consistently outperforming SVG2 in quality.

\subsection{Quality Evaluation}
\begin{figure*}[t]
    \centering
    \begin{subfigure}[b]{0.59\linewidth}
        \centering
        \includegraphics[width=\linewidth]{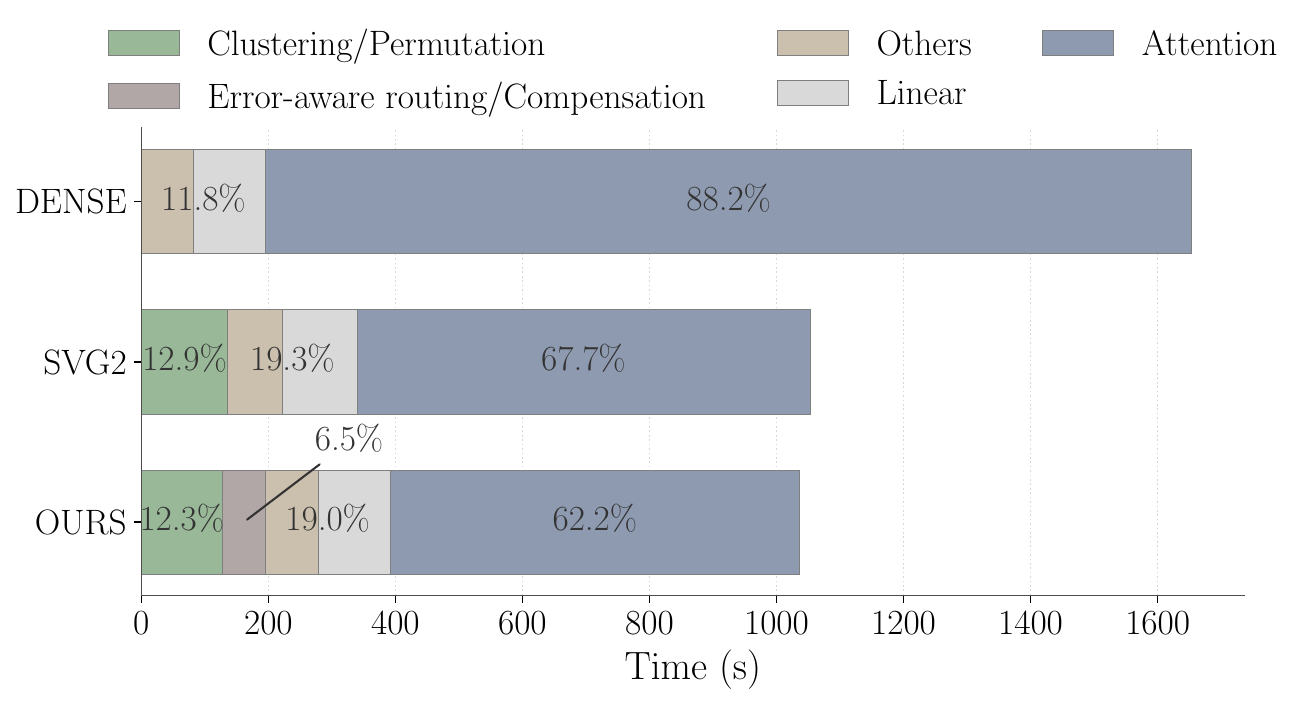}
        \caption{Generation latency breakdown}
        \label{fig:kmeans_efficiency}
    \end{subfigure}
    \hfill
    \begin{subfigure}[b]{0.40\linewidth}
        \centering
        \includegraphics[width=\linewidth]{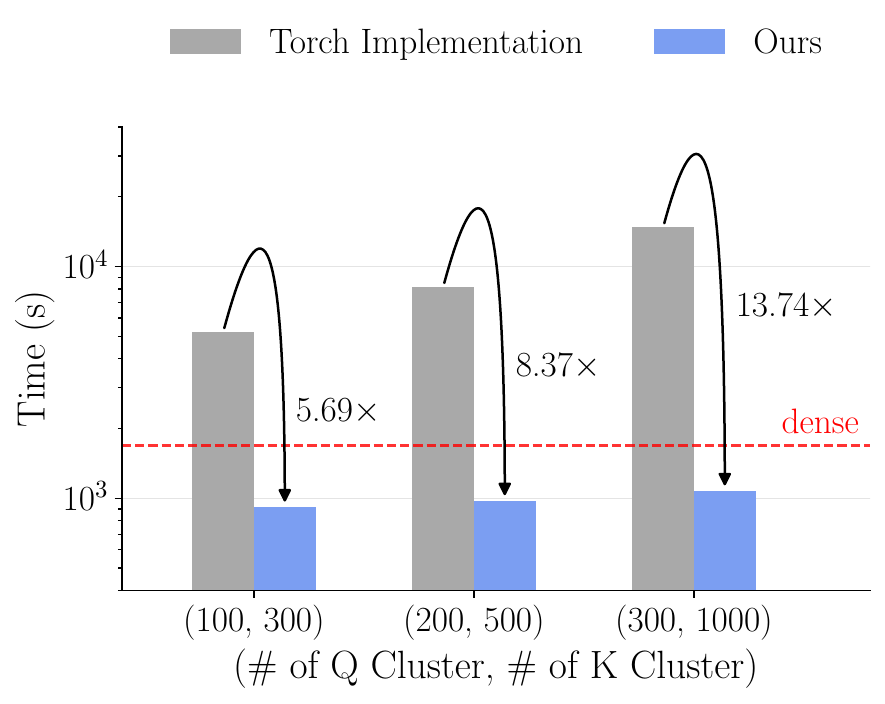}
        \caption{Efficiency of Custom Kernels}
        \label{fig:attention_efficiency}
    \end{subfigure}
    
    \caption{Efficiency evaluation. (a) illustrates the latency breakdown of different components during a single complete inference process of Wan2.2 T2V 720p, compared with the vanilla implementation and SVG2. (b) presents an end-to-end latency comparison between our efficient Triton implementation and the native PyTorch version.}
    \label{fig:overall_efficiency}
\end{figure*}

We present the attention maps of specific heads during the generation process of Wan2.2 Text-to-Video on prompts from VBench, exhibiting various sparse patterns as shown in Fig. \ref{figure:visualization}. Compared to the original top-p selection, our method covers a broader range and allocates attention to marginal tokens with relatively small attention weights. The attention maps generated by our method are more similar to the permuted maps, particularly in cases where the attention weight distribution is relatively uniform. In addition, we empirically validate the normalizer stability assumption by replacing dense attention with \texttt{SVG-EAR}, which leads to only a 4.18\% average relative change in the softmax normalizer.

The quantitative results summarized in Table \ref{table:main-results} demonstrate that \methodname consistently outperforms all baselines across PSNR, LPIPS, and SSIM, while simultaneously achieving the highest speedup. Specifically, \methodname achieves an average PSNR of \textbf{24.995} on Wan2.2 T2V and \textbf{31.043} on HunyuanVideo. Furthermore, compared to the existing SOTA SVG2~\cite{yang2025sparse}, \methodname establishes a new Pareto frontier, representing a superior trade-off between generation quality and inference efficiency. For a more comprehensive evaluation, see Appendix \ref{app:table_full}.
Moreover, for reference-free metrics, we present representative VBench metrics in Table \ref{table:main-results}, while the comprehensive evaluation is detailed in Appendix \ref{app:vbench}.

\subsection{Efficiency Evaluation}

\paragraph{End-to-end speedup evaluation.} We evaluate the efficiency of \methodname by measuring the end-to-end inference time speedup compared to full attention and total FLOPS, as shown in Table \ref{table:main-results}. \methodname achieves a speedup of \textbf{1.59$\times$} on Wan2.2 T2V and \textbf{1.93$\times$} on HunyuanVideo, higher than all baselines. Notably, \methodname-turbo achieves a speedup of \textbf{1.75$\times$} on Wan2.2 T2V, while maintaining superior quality compared to all baselines.

\paragraph{Kernel efficiency evaluation.} We evaluated the computational overhead of our method. As shown in Fig. $\ref{fig:overall_efficiency}$ (a), our approach accounts for $6.5\%$ of the total end-to-end latency during Wan2.2 text-to-video (720p) inference. 
While the latency reduction is minor, the results are noticeably better in visual fidelity and overall quality.
The efficiency is driven by our custom Triton kernel, which achieves up to a $13.74\times$ speedup compared to the original PyTorch implementation as shown in Fig. $\ref{fig:overall_efficiency}$ (b). This allows our method to enhance generation quality while maintaining a high inference speed.

\subsection{Error Analysis}
\begin{figure}[t]
  \centering
    \begin{subfigure}[b]{0.53\linewidth}
    \centering
    \includegraphics[width=\linewidth]{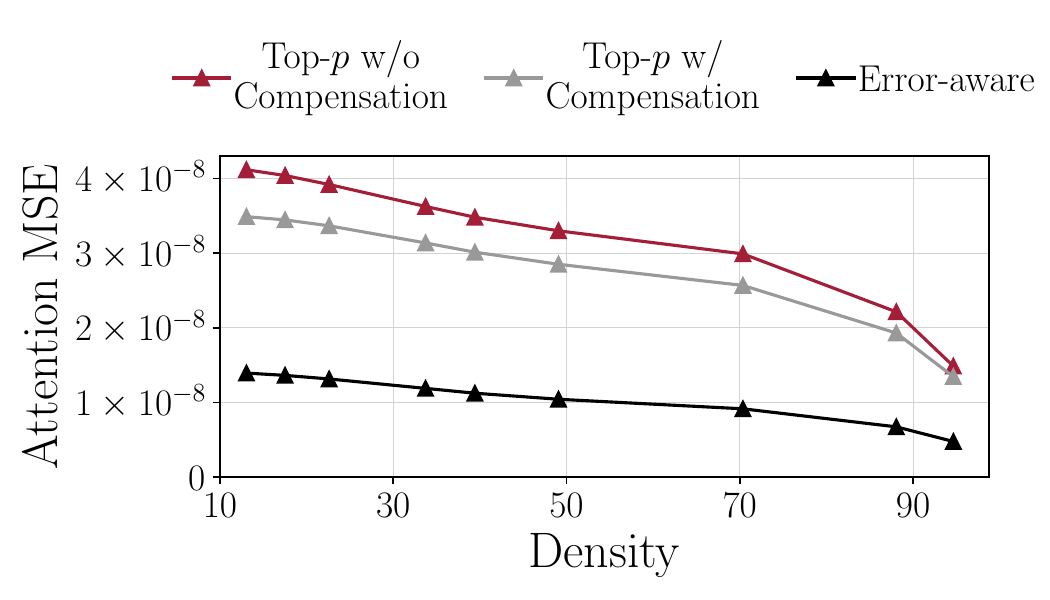}
    \caption{Attention MSE vs. Density}
    \label{fig:attention_mse}
  \end{subfigure}
  \hfill
  \begin{subfigure}[b]{0.46\linewidth}
    \centering
    \includegraphics[width=\linewidth]{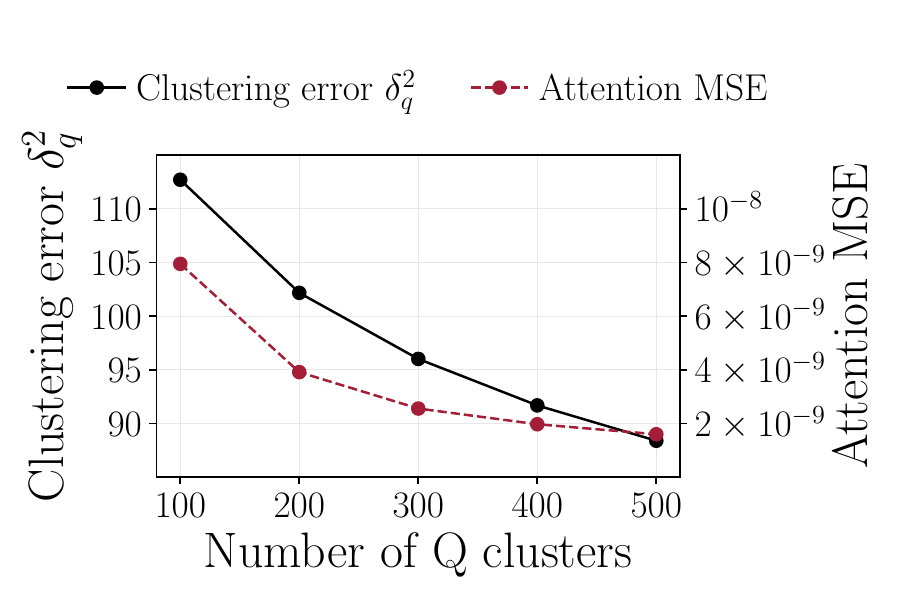}
    \caption{Clustering Error}
    \label{fig:clustering_error}
  \end{subfigure}
  \caption{Error Analysis. (a) Comparison of attention MSE versus density (\emph{i.e.,} number of sparse computation normalized by total computation) for top-$p$ selection, top-$p$ selection with linear compensation, and error-aware routing with linear compensation. (b) Both the clustering error $\delta_q^2$ (solid black line) and attention MSE (dashed red line) decrease as the number of Q clusters increases, demonstrating that higher clustering quality leads to a tighter error bound and better approximation.}
  \label{fig:attention_density_plot}
\end{figure}
\paragraph{Strategy of block selection.} To access the error-density trade-off of the three methods illustrated in Fig. \ref{fig:motivation}, \emph{i.e.,} top-$p$ selection (SVG2), top-$p$ selection with linear compensation, and error-aware routing with linear compensation, we compute the mean squared error between each method's attention map and the full attention map, \emph{i.e.,} $ \frac{1}{N_q \times N_k}\|A_{\text{sparse}} - A_{\text{full}}\|_F^2 $. Following the primary experimental settings with $Q_C=300$ and $K_C=1,000$, we conduct this evaluation by sampling a representative set of attention maps across the generation process. As shown in Fig. \ref{fig:attention_density_plot} (a), our error-aware routing yields the attention map most consistent with the full attention, while the top-$p$-based compensation only achieves a marginal error reduction against SVG2.

\paragraph{Effect of clustering.} 
We provide a theoretical guarantee that relates attention reconstruction error to clustering quality of query tokens in \S\ref{sec:methodology}.
To evaluate the effect, we conduct inference on an additional sample with $K_C=1,000$ and $p=0.85$. As shown in Fig. \ref{fig:attention_density_plot} (b), we compute the clustering error $\delta_q^2$ and the attention MSE across varying numbers of query centroids (\emph{i.e.,} $Q_C$). We observe a significant reduction in attention error as $Q_C$ increases, demonstrating that improved clustering quality directly enhances the approximation accuracy. Notably, when $Q_C$ is set to 300, the relative variation in reconstruction error remains within 4.75\%, suggesting that the approximation remains stable under this clustering configuration. This empirical observation aligns with our theoretical guarantees, where the attention error relates to the clustering quality.

\section{Conclusion and Limitation}
\label{sec:conclusion}
We presented \methodname{}, a training-free approach to accelerate DiT-based video generation under sparse attention.
\methodname{} recovers the contribution of skipped attention blocks via a parameter-free centroid compensation branch, leveraging the strong similarity structure of keys and values after semantic clustering.
To avoid the small fraction of blocks where compensation is inaccurate, we further introduced error-aware routing that prioritizes blocks with the highest error-to-cost ratio under a fixed density budget.
We provided theoretical guarantees linking attention reconstruction error to clustering quality, and demonstrated empirically that \methodname{} improves the quality--efficiency trade-off.
The main limitation of this paper is the lack of discussion and evaluation of whether the proposed method extends to attention mechanisms beyond DiTs.

\section*{Acknowledgements}
This work is supported by NSF awards IIS-1955488, IIS-2027575, DOE awards DE-SC0016260, AC02-05CH11231, and DARPA Agreement No. HR00112590131.

%
%
\bibliographystyle{splncs04}
\bibliography{main}

\clearpage
\appendix
\section{Proofs}
\label{app:bound}


\begin{proposition}
Let 
$\delta_q^2 = \frac{1}{N_q} \sum_i \| q_i - \bar{q}_i \|_2^2$
denote the average squared $\ell_2$ error between each query and its cluster mean, and let $K_{\max}$ denote the maximum $\ell_2$ norm of the key tokens. 
Given any mask $M$ that does not significantly perturb the attention normalizers.
Here, we say a mask $M$ does not significantly perturb the attention normalizers if, for all queries $i$, the normalizer remains approximately unchanged when replacing any masked entry $\exp(\frac{q_i k_j^T}{\sqrt{d}})$ (where $M^{(i)}_j = 0$) with its interpolated counterpart $\exp\Big(\frac{\big((1 - t) q_i + t_1(\bar{q}_i - q_i)\big)\big((1 - t)k_j + t_2(\bar{k}_j - k_j)\big)^T}{\sqrt{d}}\Big)$ for any $t_1, t_2 \in [0, 1]$.
We have the following error bound:
\begin{equation*}
    \frac{1}{N_qN_k} \|\tilde{A}_{\text{sparse}}^{(M)} - A_{\text{full}}\|^2_F \;\leq\; \frac{2}{N_qN_k} \sum_{i,j} (1-M^{(i)}_j)\, \frac{\hat{\epsilon}_{i,j}^2}{Z_i^2} \;+\; \frac{8\delta_q^2K_{\max}^2}{N_k d}
\end{equation*},
where $Z_i$ denotes the softmax normalizer for query $i$.
\end{proposition}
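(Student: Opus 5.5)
The plan is to work entrywise. Under the normalizer stability assumption, both the full and the sparse attention rows are normalized by (approximately) the same $Z_i$. Hence, for each query $i$,
\[
\bigl(\tilde{A}_{\text{sparse}}^{(M)} - A_{\text{full}}\bigr)_{ij} \approx \frac{1}{Z_i}\Bigl(\exp\bigl(\tfrac{q_i \bar{k}_j^T}{\sqrt{d}}\bigr)-\exp\bigl(\tfrac{q_i k_j^T}{\sqrt{d}}\bigr)\Bigr)\,(1-M^{(i)}_j).
\]
Unmasked entries agree exactly once the normalizers coincide, so only masked entries contribute. Summing squares over $(i,j)$ reduces the left-hand side to $\frac{1}{N_qN_k}\sum_{i,j}(1-M^{(i)}_j)\,\epsilon_{i,j}^2/Z_i^2$. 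The task is then to compare the true per-entry error $\epsilon_{i,j}$ with the probed error $\hat{\epsilon}_{i,j}$, in which $q_i$ is replaced by $\bar{q}_i$.

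For that comparison, write $\epsilon_{i,j} = \hat{\epsilon}_{i,j} + (\epsilon_{i,j}-\hat{\epsilon}_{i,j})$ and use $(a+b)^2\le 2a^2+2b^2$. This produces the first term $\frac{2}{N_qN_k}\sum(1-M)\hat{\epsilon}^2/Z_i^2$ with exactly the stated constant. What remains is to bound
\[
\epsilon_{i,j}-\hat{\epsilon}_{i,j} = \bigl[f(q_i,\bar{k}_j)-f(\bar{q}_i,\bar{k}_j)\bigr]-\bigl[f(q_i,k_j)-f(\bar{q}_i,k_j)\bigr],
\qquad f(q,k)=\exp\bigl(qk^T/\sqrt{d}\bigr).
\]
Each bracket is handled by the mean value theorem along the segment from $\bar{q}_i$ to $q_i$: it equals $f(\xi,k)\,(q_i-\bar{q}_i)k^T/\sqrt{d}$ at some interpolated point $\xi$. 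Cauchy--Schwarz then gives $|(q_i-\bar{q}_i)k^T|\le \|q_i-\bar{q}_i\|K_{\max}$, which also holds for $\bar{k}_j$ since a centroid has norm at most $K_{\max}$. Each bracket is therefore at most $\frac{f(\xi,k)}{\sqrt{d}}\|q_i-\bar{q}_i\|K_{\max}$.

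The interpolated exponentials $f(\xi,\cdot)$ are exactly the quantities in the normalizer stability hypothesis, which is why that hypothesis is phrased with interpolations in both $q$ and $k$. I will read the hypothesis as saying that $f(\xi,k)/Z_i$ behaves like an attention weight of query $i$: summed over $j$ it is at most about $1$, and each individual weight is at most $1$. Squaring the difference of the two brackets with $(a+b)^2\le 2a^2+2b^2$ gives
\[
(\epsilon_{i,j}-\hat{\epsilon}_{i,j})^2/Z_i^2 \le \frac{2\|q_i-\bar{q}_i\|^2K_{\max}^2}{d}\bigl(w_{ij}^2+w_{ij}'^2\bigr),
\]
where $w,w'$ are the interpolated normalized weights. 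Summing over $j$ with $\sum_j w_{ij}^2\le \sum_j w_{ij}\le 1$ gives at most $4\|q_i-\bar{q}_i\|^2K_{\max}^2/d$. The outer factor $2$ raises this to $8$, and averaging over $i$ and dividing by $N_k$ yields $\frac{8\delta_q^2K_{\max}^2}{N_kd}$. Dropping the factor $(1-M)$ only enlarges the sum.

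The main obstacle is this last step: making rigorous that the mean-value intermediate points keep the normalized interpolated weights summing to at most $1$. This is where the informal \emph{approximately unchanged} assumption has to be invoked, and I would state it explicitly as $\sum_j f(\xi_{ij},\cdot)\le Z_i$ for every choice of interpolation points. The argument also requires the constant bookkeeping to land on exactly $2$ and $8$. If the weights-sum-to-one bound proves delicate, I would fall back to the cruder per-entry bound $w\le 1$, but that loses the $1/N_k$ factor, so the summation over $j$ is essential.
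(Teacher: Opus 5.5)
Your proposal is correct at the same informal level of rigor as the paper and follows essentially the same route. Both arguments use the entrywise reduction under normalizer stability, the split $\epsilon_{i,j}^2\le 2\hat{\epsilon}_{i,j}^2+2(\epsilon_{i,j}-\hat{\epsilon}_{i,j})^2$, a first-order bound along the segment from $\bar{q}_i$ to $q_i$ using $\|k_j\|_2,\|\bar{k}_j\|_2\le K_{\max}$, and the pseudo-probability step that yields the factor $4$ and hence $8$. The one difference is that the paper writes the difference as an integral and applies Cauchy--Schwarz to $\int_0^1\|\nabla f_j(q(t))\|_2^2\,dt$, so at each $t$ all $j$ share the same interpolated query $q(t)$; this makes the ``weights sum to at most one'' step a statement about a single attention row, which is slightly more natural than your per-$j$ mean-value points.
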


\begin{proof}
Given the normalizer stability assumption, for entries $i, j$ with $M^{(i)}_j = 0$ we have:
\begin{align}
    \Delta A_{i, j} \approx \frac{1}{Z_i} \Big(\exp\big(\frac{q_ik_j^T}{\sqrt{d}}\big) - \exp\big(\frac{q_i \bar{k}_j^T}{\sqrt{d}}\big)\Big) = \frac{f_j(q_i)}{Z_i}
\end{align},
and for entries $i, j$ with $M^{(i)}_j = 0$ we have $\Delta A_{i, j} \approx 0$.
We then have:
\begin{align}
    f_j^2(q_i) &= \big(f_j(q_i) - f_j(\bar{q}_i) + f_j(\bar{q}_i)\big)^2 \leq 2\big(f_j(q_i) - f_j(\bar{q}_i)\big)^2 + 2\big(f_j(\bar{q}_i)\big)^2 \\
    &\leq 2\big(f_j(q_i) - f_j(\bar{q}_i)\big)^2 + 2\hat{\epsilon}_{i,j}^2
\end{align}
By applying Cauchy-Schwarz inequality on the integral over $dt$, we obtain:
\begin{align}
    \label{eq:cauchy}
    2\big(f_j(q_i) - f_j(\bar{q}_i)\big)^2 \leq 2\|q_i - \bar{q}_i\|_2^2 \int_0^1 \|\nabla f_j\big(q(t)\big)\|_2^2\ dt
\end{align},
where $q(t) = (1-t)q_i + t\bar{q}_i$ is the linear interpolation between $q_i$ and $\bar{q}_i$.
Given that
\begin{align}
    \label{eq:grad}
    \nabla f_j(q(t)) = \frac{1}{\sqrt{d}} \Big(\exp\big(\frac{q(t)k_j^T}{\sqrt{d}}\big)k_j - \exp\big(\frac{q(t)k_j^T}{\sqrt{d}}\big)\bar{k_j}\Big)
\end{align}
, by applying the triangle inequality, we have:
\begin{align}
    \label{eq:triangle}
    \|\nabla f_j(q(t))\|_2 \leq \frac{K_{\max}}{\sqrt{d}} \Big(\exp\big(\frac{q(t)k_j^T}{\sqrt{d}}\big) + \exp\big(\frac{q(t)\bar{k}_j^T}{\sqrt{d}}\big)\Big)
\end{align}
Combining \eqref{eq:cauchy} and \eqref{eq:triangle}, we obtain:
\begin{align}
    2\big(f_j(q_i) - f_j(\bar{q}_i)\big)^2 \leq 2\|q_i - \bar{q}_i\|_2^2 \int_0^1 \frac{K_{\max}^2}{d} \Big(\exp\big(\frac{q(t)k_j^T}{\sqrt{d}}\big) + \exp\big(\frac{q(t)\bar{k}_j^T}{\sqrt{d}}\big)\Big)^2 dt
\end{align}
Summing over all keys $j$ with $M^{(i)}_j = 0$, we obtain:
\begin{align}
    &\frac{2}{Z_i^2}\sum_{j, M^{(i)}_j = 0} \big(f_j(q_i) - f_j(\bar{q}_i)\big)^2 \\
    &\leq 2\|q_i - \bar{q}_i\|_2^2 \int_0^1 \frac{K_{\max}^2}{d} \sum_{j, M^{(i)}_j = 0} \Big(\exp\big(\frac{q(t)k_j^T}{\sqrt{d}}\big)/Z_i^2 + \exp\big(\frac{q(t)\bar{k}_j^T}{\sqrt{d}}\big)/Z_i^2\Big)^2 dt
\end{align}
By the normalizer stability assumption, $\exp\big(\frac{q(t)k_j^T}{\sqrt{d}}\big)/Z_i^2$ and $\exp\big(\frac{q(t)\bar{k}_j^T}{\sqrt{d}}\big)/Z_i^2$ can be regarded as pesudo-probabilities $P_j(t)$ and $\bar{P}_j(t)$, respectively. This leads to:
\begin{align}
        &\frac{2}{Z_i^2}\sum_{j, M^{(i)}_j = 0} \big(f_j(q_i) - f_j(\bar{q}_i)\big)^2 \\
    &\leq 2\|q_i - \bar{q}_i\|_2^2 \int_0^1 \frac{K_{\max}^2}{d} \sum_{j} \big(P_j(t) + \bar{P}_j(t)\big)^2 dt  \\
    & \leq 2\|q_i - \bar{q}_i\|_2^2 \int_0^1 \frac{4K_{\max}^2}{d} dt = \frac{8\delta_q^2K_{\max}^2}{d}
\end{align}
By summing up over all queries $i$, we obtain:
\begin{align}
    \frac{1}{N_qN_k} \|\tilde{A}_{\text{sparse}}^{(M)} - A_{\text{full}}\|^2_F \leq \frac{2}{N_qN_k} \sum_{i,j} (1-M^{(i)}_j)\, \frac{\hat{\epsilon}_{i,j}^2}{Z_i^2} \;+\; \frac{8\delta_q^2K_{\max}^2}{N_k d}
\end{align},
which completes the proof.
\end{proof}
\section{Kernel Implementation Details}
\label{app:kernel}
To maximize hardware efficiency and minimize memory overhead, both the error estimation (Algorithm \ref{alg:error_estimation}) and fused attention (Algorithm \ref{alg:fused_qc_attention_blocked}) algorithms are implemented as highly optimized GPU kernels using Triton. For clarity, the batch and head dimensions are omitted from the pseudocode, as they can be trivially parallelized across the computation grid.

\begin{figure}[H]
    \centering
    \begin{minipage}{0.9\textwidth} 
        
        \begin{algorithm}[H]
            \caption{Error Estimation Kernel}
            \label{alg:error_estimation}
            \begin{algorithmic}[1]
                \REQUIRE $\bar{Q} \in \mathbb{R}^{Q_c \times D}$, $\bar{K}, \bar{V} \in \mathbb{R}^{K_c \times D}$
                \REQUIRE $K, V, \mathcal{S}_j$
                \ENSURE $E \in \mathbb{R}^{Q_c \times K_c}$

                \STATE $S_{i,j} \leftarrow \frac{\bar{Q}_i \bar{K}_j^\top}{\sqrt{D}}, m_i \leftarrow \max_{j} S_{i,j}$
                \STATE $Y_{i,j} \leftarrow \exp(S_{i,j} - m_i) \bar{V}_j$
                \STATE $E_{i,j} \leftarrow \|Y_{i,j}\|_2^2, m^{\text{local}}_{i,j} \leftarrow m_i$

                \FOR{each $i \in \{1, \dots, Q_c\}$}
                    \FOR{each $j \in \{1, \dots, K_c\}$}
                        \FOR{$(k, v) \in \mathcal{S}_j$}
                            \STATE $s \leftarrow \frac{\bar{Q}_i k^\top}{\sqrt{D}}$
                            \STATE $m_{\text{new}} \leftarrow \max(m^{\text{local}}_{i,j}, s)$
                            \STATE $\alpha \leftarrow \exp(m^{\text{local}}_{i,j} - m_{\text{new}})$
                            \STATE $E_{i,j} \leftarrow E_{i,j} \cdot \alpha^2, Y_{i,j} \leftarrow Y_{i,j} \cdot \alpha$
                            \STATE $E_{i,j} \leftarrow E_{i,j} + \|\exp(s - m_{\text{new}}) v - Y_{i,j}\|_2^2$
                            \STATE $m^{\text{local}}_{i,j} \leftarrow m_{\text{new}}$
                        \ENDFOR
                        \STATE $E_{i,j} \leftarrow E_{i,j} \cdot \exp(2 \cdot (m_i - m^{\text{local}}_{i,j}))$
                    \ENDFOR
                \ENDFOR
                \RETURN $E$
            \end{algorithmic}
        \end{algorithm}


        \begin{algorithm}[H]
            \caption{Fused Attention Kernel}
            \label{alg:fused_qc_attention_blocked}
            \begin{algorithmic}[1]
                \REQUIRE $Q \in \mathbb{R}^{S \times D}, \bar{K}, \bar{V} \in \mathbb{R}^{K_c \times D}$
                \REQUIRE $M \in \mathbb{R}^{Q_c \times K_c}, W \in \mathbb{R}^{K_c}, \mathcal{C}_c$
                \REQUIRE $O \in \mathbb{R}^{S \times D}, LSE \in \mathbb{R}^{S}$
                \ENSURE $\text{Out} \in \mathbb{R}^{S \times D}$

                \FOR{each cluster $c \in \{1, \dots, Q_c\}$}
                    \FOR{each query $i \in \mathcal{C}_c$}
                        \STATE $m_i \leftarrow LSE_i, \quad l_i \leftarrow 1.0$
                        \FOR{each $j$ s.t. $M_{c,j} = \text{False}$}
                            \STATE $S_{i,j} \leftarrow \frac{Q_i \bar{K}_j^\top}{\sqrt{D}} + \ln(W_j)$
                            \STATE $m_{\text{new}} \leftarrow \max(m_i, S_{i,j})$
                            \STATE $\alpha \leftarrow \exp(m_i - m_{\text{new}})$
                            \STATE $p \leftarrow \exp(S_{i,j} - m_{\text{new}})$
                            \STATE $l_i \leftarrow l_i \cdot \alpha + p$
                            \STATE $\text{acc}_i \leftarrow \text{acc}_i \cdot \alpha + p \bar{V}_j$
                            \STATE $m_i \leftarrow m_{\text{new}}$
                        \ENDFOR
                        \STATE $\text{Out}_i \leftarrow \text{acc}_i / l_i$
                    \ENDFOR
                \ENDFOR
                \RETURN $\text{Out}$
            \end{algorithmic}
        \end{algorithm}

    \end{minipage}
\end{figure}
\newpage

\section{Vbench Results}
\label{app:vbench}
To evaluate our model via the VBench framework, we focus on five key dimensions: Subject Consistency (SubConsis), Background Consistency (BackConsis), Motion Smoothness (MotionSmooth), Aesthetic Quality (AesQual), and Imaging Quality (ImagQual). All scores are averaged across our evaluation dataset, and the quantitative results are presented in Table \ref{table:vbench_results_warmup20}.






\begin{table}[H]
\centering
\caption{VBench result of \methodname.}
\label{table:vbench_results_warmup20}
\renewcommand{\arraystretch}{1.3}
\resizebox{0.95\textwidth}{!}{%
\begin{tabular}{l|ccccc}
\toprule
\textbf{Config} & SubConsis & BackConsis & MotionSmooth & AesQual & ImagQual\\
\midrule

\textit{Wan 2.2 A14B, 720P, I2V} & 0.960 & 0.960 & 0.987 & 0.628 & 0.704\\
\cmidrule(lr){1-6}
SVG      & 0.958 & \underline{0.959} & \textbf{0.989} & \underline{0.627} & \underline{0.703}\\
Sparge      & 0.958 & 0.957 & \underline{0.987} & \textbf{0.627} & 0.703\\
SVG2      & 0.958 & 0.957 & 0.986 & 0.624 & 0.701\\
\rowcolor{lightblue}
\methodname     & \textbf{0.959} & \textbf{0.959} & 0.986 & 0.627 & \textbf{0.703}\\
\rowcolor{lightblue}
\methodname-Turbo     & \underline{0.958} & 0.958 & 0.987 & 0.625 & 0.702\\

\midrule

\textit{Wan 2.1 A14B, 720P, T2V} & 0.916 & 0.941 & 0.973 & 0.650 & 0.706\\
\cmidrule(lr){1-6}
SVG          & 0.912 & 0.940 & \underline{0.974} & 0.651 & \textbf{0.712} \\ 
Sparge          & \textbf{0.916} & \underline{0.942} & \textbf{0.974} & \textbf{0.653} & \underline{0.708}\\ 
SVG2              & 0.914 & \textbf{0.943} & 0.973 & 0.651 & 0.705\\
\rowcolor{lightblue}
\methodname             & \underline{0.915} & 0.941 & 0.973 & \underline{0.652} & 0.706 \\
\rowcolor{lightblue}
\methodname-Turbo             & 0.915 & 0.941 & 0.973 & 0.651 & 0.705 \\
\midrule

\textit{Hunyuan 13B, 720P, T2V} & 0.904 & 0.945 & 0.993 & 0.625 & 0.666 \\
\cmidrule(lr){1-6}
SVG          & \underline{0.905} & \underline{0.947} & \underline{0.993} & \textbf{0.627} & \textbf{0.665} \\ 
Sparge          & \textbf{0.908} & \textbf{0.948} & \textbf{0.994} & 0.601 & 0.629 \\ 
SVG2              & 0.901 & 0.944 & 0.993 & 0.624 & 0.654 \\
\rowcolor{lightblue}
\methodname             & 0.903 & 0.944 & 0.993 & \underline{0.626} & \underline{0.659} \\

\bottomrule
\end{tabular}
}
\end{table}

\section{Config}
\label{app:config}

Table \ref{table:config} summarizes the detailed hyperparameter configurations for \methodname and the baseline methods across different tasks and model backbones. To align all implementation details except for the self-attention mechanism, we set the time warm-up of SVG to 1 to achieve a dense attention pattern. The specific parameters are defined as follows:
\begin{itemize}
\item \textbf{Time-warm} denotes the number of initial diffusion steps during which sparse attention is disabled in favor of full attention. For the majority of baseline methods, this is set to 20\% of the total timesteps . For SparseAttn and SVG on HunyuanVideo, we increase this value to 30\% to mitigate performance degradation.
\item \textbf{Layer-warm} specifies the number of model layers that utilize full dense attention during the warmup phase. For the Wan2.2 model, we apply dense attention exclusively to the first layer. Similarly, for HunyuanVideo, dense attention is restricted to the first layer across both single-stream and dual-stream blocks.
\item \textbf{TopP} is utilized to regulate the computational budget for both SVG2 and \methodname.
\item \textbf{QC} and \textbf{KC} specify the number of query and key clusters, respectively, for SVG2 and our proposed \methodname.
\item \textbf{Density} specifies the density of the attention mask, applied exclusively to SpargeAttn and SVG.
\end{itemize}

\begin{table}[H]
\centering
\caption{Detailed configuration and hyperparameter settings for \methodname and baseline methods. The symbol `` -- '' indicates that a specific parameter is not applicable to the corresponding model.}
\label{table:config}
\small
\setlength{\tabcolsep}{6pt}
\renewcommand{\arraystretch}{1.1}

\begin{tabular}{lcccccc}
\toprule
\textbf{Config} & \textbf{Time-warm} & \textbf{Layer-warm} & \textbf{QC} & \textbf{KC} & \textbf{TopP} & \textbf{Density} \\
\midrule

\multicolumn{7}{l}{\textit{Wan 2.2 A14B, 720P, image-to-video}} \\
\midrule
SpargeAttn & 10/50 & 1/40 & -- & -- & -- & 0.3 \\
SVG        & 10/50 & 1/40 & -- & -- & -- & 0.3 \\
SVG2       & 10/50 & 1/40 & 300 & 1000 & 0.9 & -- \\
SVG2-turbo       & 15/50 & 1/40 & 300 & 1000 & 0.7 & -- \\
\rowcolor{lightblue}
\methodname       & 10/50 & 1/40 & 300 & 1000 & 0.85 & -- \\
\rowcolor{lightblue}
\methodname-Turbo & 10/50 & 1/40 & 200 & 500  & 0.8 & -- \\
\midrule


\multicolumn{7}{l}{\textit{Wan 2.2 A14B, 720P, text-to-video}} \\
\midrule
SpargeAttn & 10/50 & 1/40 & -- & -- & -- & 0.3 \\
SVG        & 10/50 & 1/40 & -- & -- & -- & 0.3 \\
SVG2       & 10/50 & 1/40 & 300 & 1000 & 0.9 & -- \\
SVG2-turbo       & 15/50 & 1/40 & 100 & 500 & 0.7 & -- \\
\rowcolor{lightblue}
\methodname       & 10/50 & 1/40 & 300 & 1000 & 0.85 & -- \\
\rowcolor{lightblue}
\methodname-Turbo & 10/50 & 1/40 & 200 & 500  & 0.8 & -- \\


\midrule
\multicolumn{7}{l}{\textit{Hunyuan 13B, 720P, text-to-video}} \\
\midrule
SpargeAttn & 15/50 & 1/20, 1/40 & -- & -- & -- & 0.4 \\
SVG        & 15/50 & 1/20, 1/40 & -- & -- & -- & 0.3 \\
SVG2       & 10/50 & 1/20, 1/40 & 400 & 1000 & 0.9 & -- \\
\rowcolor{lightblue}
\methodname       & 10/50 & 1/20, 1/40 & 400 & 1000 & 0.85 & -- \\
\bottomrule
\end{tabular}
\end{table}

\section{Full Table Result}
\label{app:table_full}
In Table \ref{table:result_full}, we additionally include SVG2-Turbo, which demonstrates our clear Pareto frontier. Specifically, while SVG2-Turbo further improves acceleration at the cost of some generation quality, our turbo variant aligns with its high inference speed but achieves even higher generation quality than the baseline SVG.
\begin{table}[H]
    \centering
    \caption{Quality and efficiency benchmarking results of \methodname and baselines, where the best results are highlighted, and the second-best results are underlined.}
    \label{table:result_full}
    \renewcommand{\arraystretch}{1.5}
    \resizebox{1\linewidth}{!}{%
    \begin{tabular}{l|ccccc|ccc}
    \toprule
    \textbf{Config} & \textbf{PSNR}$\uparrow$ & \textbf{SSIM}$\uparrow$ & \textbf{LPIPS}$\downarrow$ & \textbf{ImgQual}$\uparrow$ & \textbf{SubCons}$\uparrow$ & \textbf{Density}$\downarrow$ & \textbf{FLOP}$\downarrow$ & \textbf{Speedup}$\uparrow$ \\
    \midrule

    \textit{Wan 2.2 14B, 720P, I2V} & - & - & - & 0.704 & 0.960 & 100\% & 658.46 PFLOPS & $1\times$ \\
    \cmidrule(lr){1-9}
    SpargeAttn             & 27.140 & 0.883 & 0.116 & \underline{0.703} & 0.958 & 30.15\% & 396.83 PFLOPS & $1.58\times$ \\
    SVG                    & 25.297 & 0.844 & 0.139 & 0.703 & \underline{0.958} & 30.25\% & 397.20 PFLOPS & $1.58\times$ \\
    SVG2                   & 27.668 & 0.888 & 0.117 & 0.701 & 0.958 & 29.38\% & 393.95 PFLOPS & $1.61\times$ \\
    SVG2-Turbo & 27.536 & 0.884 & 0.124 & 0.700 & 0.957 & \textbf{16.69\%} & 385.41 PFLOPS & \underline{1.72$\times$} \\
    \rowcolor{lightblue}
    \methodname     & \textbf{29.759} & \textbf{0.918} & \textbf{0.093} & \textbf{0.704} & \textbf{0.959} & \underline{23.64\%} & 378.88 PFLOPS & \underline{1.61$\times$} \\
    \rowcolor{lightblue}
    \methodname-Turbo & \underline{28.344} & \underline{0.900} & \underline{0.108} & 0.702 & 0.958 & \textbf{20.42\%} & 363.85 PFLOPS & \textbf{1.77$\times$} \\
    \midrule

    \textit{Wan 2.2 14B, 720P, T2V} & - & - & - & 0.706 & 0.916 & 100\% & 658.46 PFLOPS & $1\times$ \\
    \cmidrule(lr){1-9}
    SpargeAttn             & 20.872 & 0.708 & 0.242 & \underline{0.708} & \textbf{0.916} & 30.15\% & 396.83 PFLOPS & $1.58\times$ \\ 
    SVG      & 19.455 & 0.654 & 0.292 & \textbf{0.712} & 0.912 & 30.25\% & 397.20 PFLOPS & $1.59\times$ \\
    SVG2     & 23.556 & 0.802 & 0.183 & 0.705 & 0.914 & 32.30\% & 404.88 PFLOPS & $1.57\times$ \\
    SVG2-Turbo & 23.173 & 0.772 & 0.212 & 0.703 & 0.910 & 18.77\% & 392.23 PFLOPS & \underline{$1.71\times$} \\
    \rowcolor{lightblue}
    \methodname     & \textbf{24.995} & \textbf{0.841} & \textbf{0.153} & 0.706 & \underline{0.915} & \underline{25.95\%} & 387.53 PFLOPS & \underline{$1.59\times$} \\
    \rowcolor{lightblue}
    \methodname-Turbo & \underline{23.940} & \underline{0.814} & \underline{0.174} & 0.705 & 0.915 & \textbf{22.25\%} & 370.71 PFLOPS & \textbf{1.75$\times$} \\
    \midrule

    \textit{Hunyuan 13B, 720P, T2V} & - & - & - & 0.665 & 0.904 & 100\% & 612.38 PFLOPS & $1\times$ \\
    \cmidrule(lr){1-9}
    SpargeAttn             & 24.589 & 0.796 & 0.232 & 0.629 & \textbf{0.908} & 40.09\% & 389.76 PFLOPS & $1.38\times$ \\
    SVG      & 27.325 & 0.880 & 0.140 & \textbf{0.665} & \underline{0.905} & 29.92\% & 351.97 PFLOPS & $1.57 \times$ \\
    SVG2      & \underline{29.445} & \underline{0.911} & \underline{0.112} & 0.654 & 0.901 & 26.21\% & 299.02 PFLOPS & \underline{1.89$\times$} \\
    \rowcolor{lightblue}
    \methodname      & \textbf{31.043} & \textbf{0.928} & \textbf{0.092} & \underline{0.659} & 0.903 & 22.17\% & 281.86 PFLOPS & \textbf{1.93$\times$} \\

    \bottomrule
    \end{tabular}
    }
\end{table}

\clearpage

\section{Ablations}

We compare SVG-EAR against and SVG2 variant with our compensation module but without modifying its original block-selection strategy on Wan2.2 T2V tasks. PSNR improves from \textbf{23.556} to \textbf{24.157}, and further to \textbf{24.995} with our method. The larger gain confirms that approximation error based block selection is crucial.

\section{Sensitivity and overhead}

We additionally evaluate the effect of different clustering configurations, as shown in Table~\ref{tab:qc_kc_ablation}. All other settings are kept the same as those used in the main results. Varying $q_c/k_c$ has little effect on generation quality, while \methodname introduces only a small overhead.

\begin{table}[H]
\centering
\caption{Sensitivity and overhead under different clustering configurations.}
\label{tab:qc_kc_ablation}
\renewcommand{\arraystretch}{1.2}
\setlength{\tabcolsep}{5pt}
\begin{tabular}{ccccccc}
\toprule
$q_c$ & $k_c$ & PSNR & SSIM & LPIPS & Speedup & Overhead \\
\midrule
300 & 1000 & \textbf{24.995} & \textbf{0.841} & \textbf{0.153} & $1.59\times$ & 6.35\% \\
100 & 1000 & 23.862 & 0.806 & 0.180 & $1.68\times$ & 6.17\% \\
300 & 500  & 24.451 & 0.828 & 0.163 & $1.66\times$ & 3.82\% \\
100 & 500  & 23.438 & 0.793 & 0.191 & $1.74\times$ & \textbf{3.70\%} \\
50 & 500  & 22.501 & 0.750 & 0.228 & \textbf{$1.75\times$} & 3.74\% \\
\bottomrule
\end{tabular}
\end{table}

\section{Visualization of the Generated Videos}

We provide visualization comparison between \methodname and full Attention on HunyuanVideo and
Wan 2.2 as shown in Figure \ref{fig:wan_t2v_vis}, Figure \ref{fig:wan_i2v_vis} and Figure \ref{fig:hyvideo_t2v_vis}. 
Our generated results achieve an extremely high pixel-level similarity to full attention, achieving an excellent visual quality that is indistinguishable from the full attention results. This provides further evidence of our method's high fidelity in reproducing full attention outputs while significantly reducing computational costs.

\newpage

\begin{figure}[H]
    \centering
    
    \begin{minipage}{0.15\linewidth}
        \centering
        \small Dense Attention
    \end{minipage}
    \hfill
    \begin{minipage}{0.84\linewidth}
        \includegraphics[width=\linewidth]{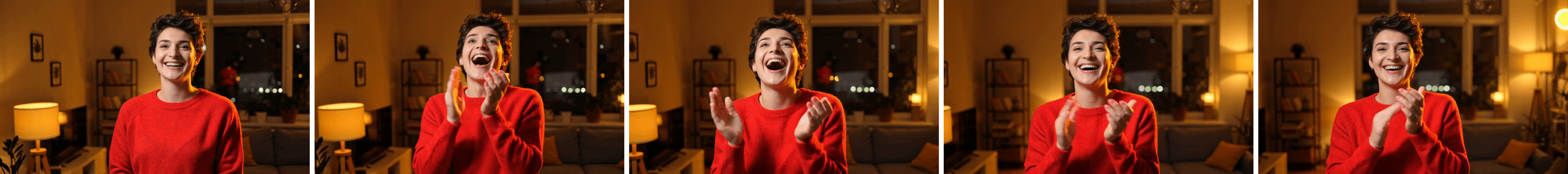}
    \end{minipage}

    \begin{minipage}{0.15\linewidth}
        \centering
        \small \methodname
    \end{minipage}
    \hfill
    \begin{minipage}{0.84\linewidth}
        \includegraphics[width=\linewidth]{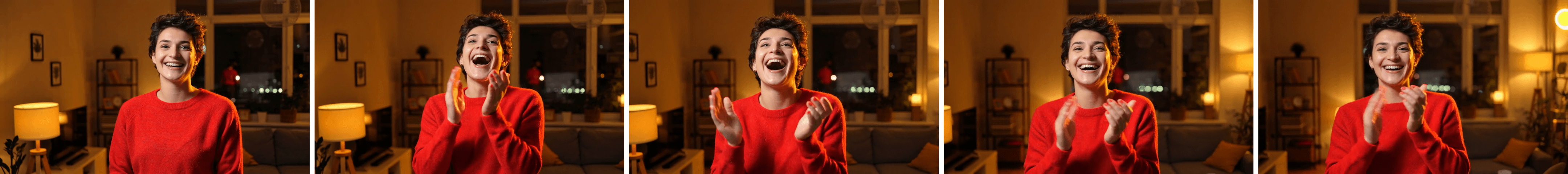}
    \end{minipage}

    \vspace{4pt} 

    \begin{minipage}{0.15\linewidth}
        \centering
        \small Dense Attention
    \end{minipage}
    \hfill
    \begin{minipage}{0.84\linewidth}
        \includegraphics[width=\linewidth]{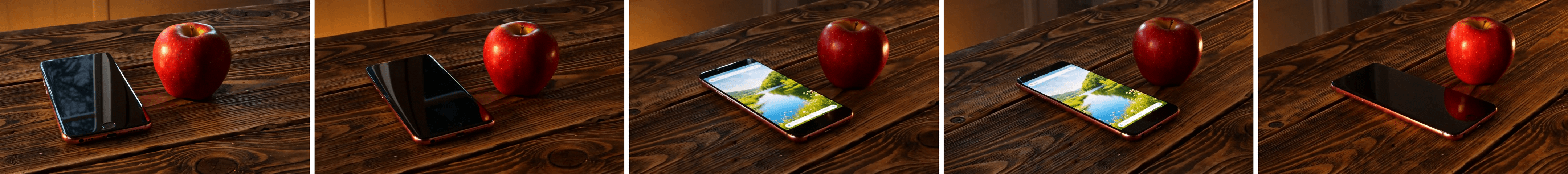}
    \end{minipage}

    \begin{minipage}{0.15\linewidth}
        \centering
        \small \methodname
    \end{minipage}
    \hfill
    \begin{minipage}{0.84\linewidth}
        \includegraphics[width=\linewidth]{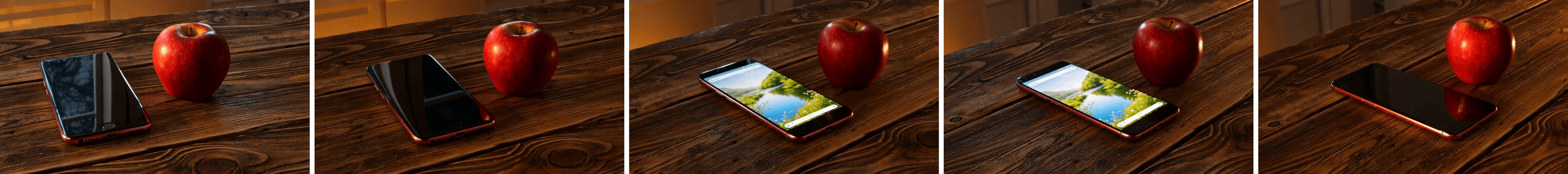}
    \end{minipage}

    \vspace{4pt}
    \begin{minipage}{0.15\linewidth}
        \centering
        \small Dense Attention
    \end{minipage}
    \hfill
    \begin{minipage}{0.84\linewidth}
        \includegraphics[width=\linewidth]{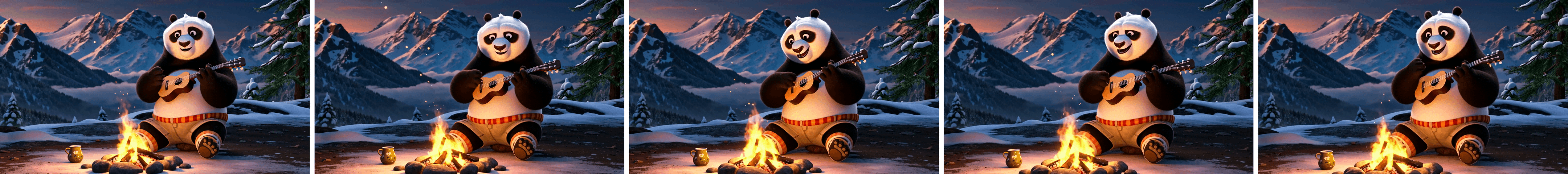}
    \end{minipage}

    \begin{minipage}{0.15\linewidth}
        \centering
        \small \methodname
    \end{minipage}
    \hfill
    \begin{minipage}{0.84\linewidth}
        \includegraphics[width=\linewidth]{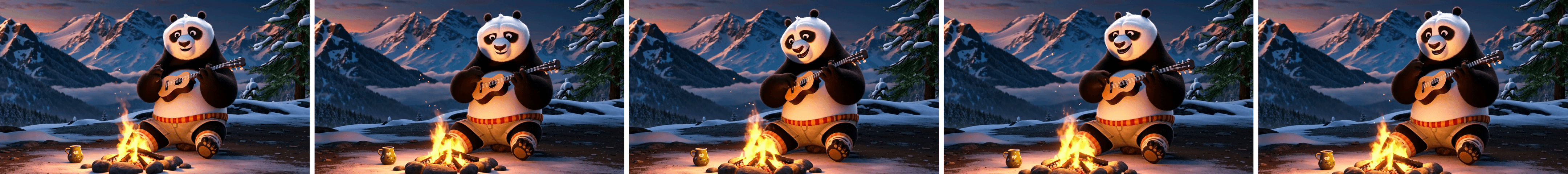}
    \end{minipage}

    \vspace{4pt} 
    \begin{minipage}{0.15\linewidth}
        \centering
        \small Dense Attention
    \end{minipage}
    \hfill
    \begin{minipage}{0.84\linewidth}
        \includegraphics[width=\linewidth]{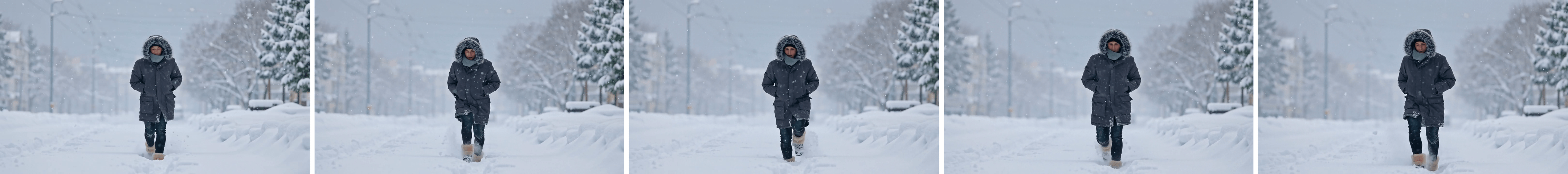}
    \end{minipage}

    \begin{minipage}{0.15\linewidth}
        \centering
        \small \methodname
    \end{minipage}
    \hfill
    \begin{minipage}{0.84\linewidth}
        \includegraphics[width=\linewidth]{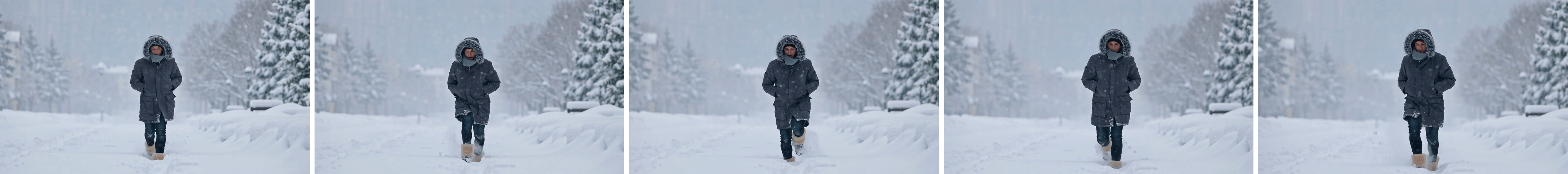}
    \end{minipage}

    \vspace{4pt} 
    \begin{minipage}{0.15\linewidth}
        \centering
        \small Dense Attention
    \end{minipage}
    \hfill
    \begin{minipage}{0.84\linewidth}
        \includegraphics[width=\linewidth]{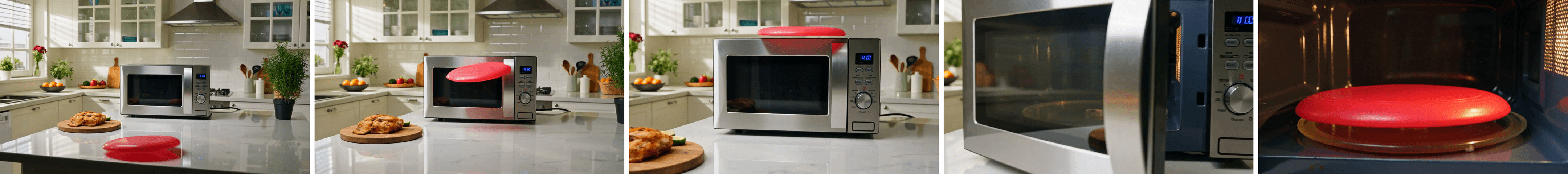}
    \end{minipage}

    \begin{minipage}{0.15\linewidth}
        \centering
        \small \methodname
    \end{minipage}
    \hfill
    \begin{minipage}{0.84\linewidth}
        \includegraphics[width=\linewidth]{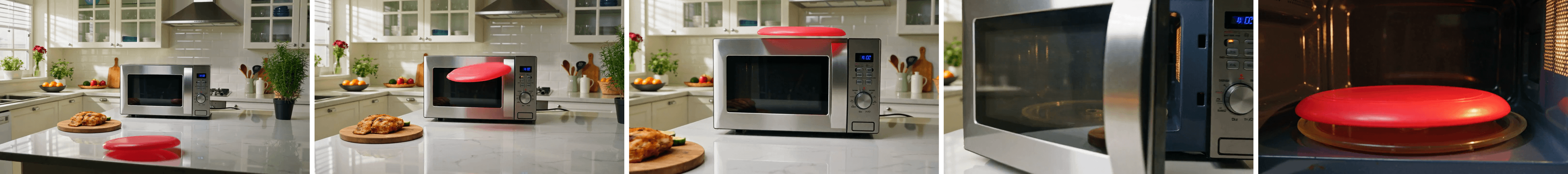}
    \end{minipage}

    \vspace{4pt} 
    \begin{minipage}{0.15\linewidth}
        \centering
        \small Dense Attention
    \end{minipage}
    \hfill
    \begin{minipage}{0.84\linewidth}
        \includegraphics[width=\linewidth]{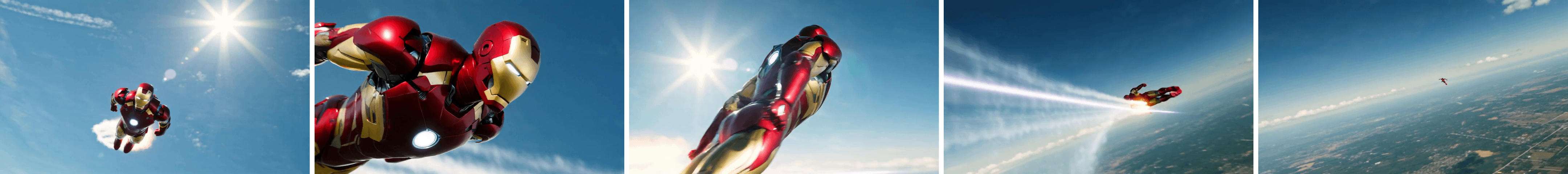}
    \end{minipage}

    \begin{minipage}{0.15\linewidth}
        \centering
        \small \methodname
    \end{minipage}
    \hfill
    \begin{minipage}{0.84\linewidth}
        \includegraphics[width=\linewidth]{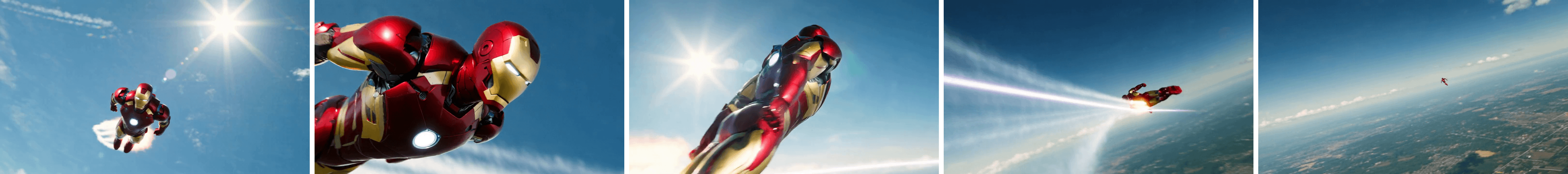}
    \end{minipage}

    \vspace{4pt} 

    \begin{minipage}{0.15\linewidth}
        \centering
        \small Dense Attention
    \end{minipage}
    \hfill
    \begin{minipage}{0.84\linewidth}
        \includegraphics[width=\linewidth]{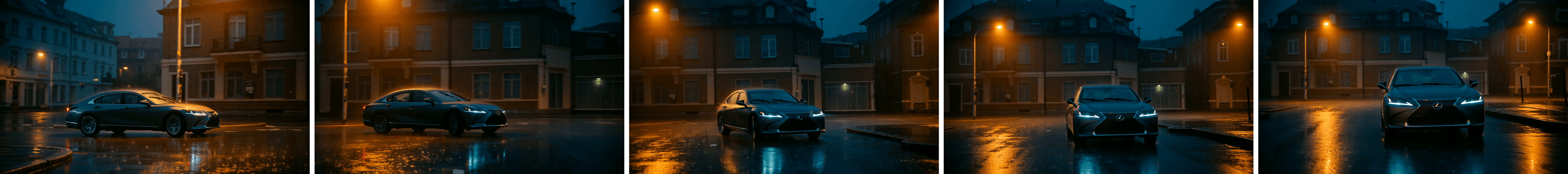}
    \end{minipage}

    \begin{minipage}{0.15\linewidth}
        \centering
        \small \methodname
    \end{minipage}
    \hfill
    \begin{minipage}{0.84\linewidth}
        \includegraphics[width=\linewidth]{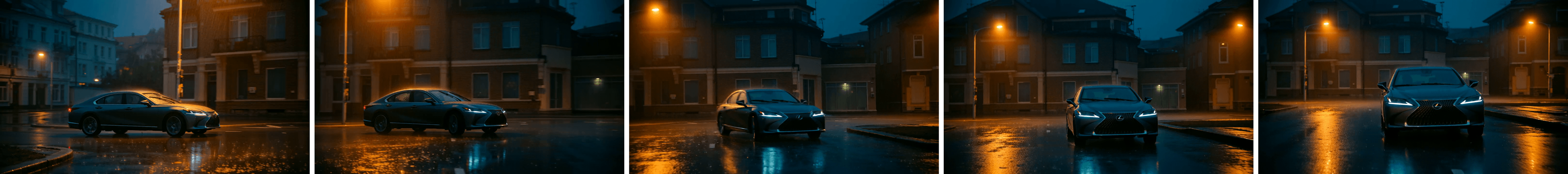}
    \end{minipage}

    \caption{Comparion of Dense Attention and \methodname on Wan 2.2 Text-to-Video generation}
    \label{fig:wan_t2v_vis}
\end{figure}

\begin{figure}[H]
    \centering
    
    \begin{minipage}{0.15\linewidth}
        \centering
        \small Dense Attention
    \end{minipage}
    \hfill
    \begin{minipage}{0.84\linewidth}
        \includegraphics[width=\linewidth]{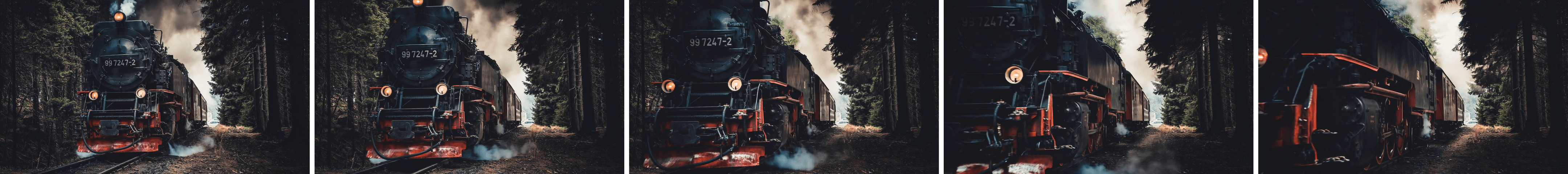}
    \end{minipage}

    \begin{minipage}{0.15\linewidth}
        \centering
        \small \methodname
    \end{minipage}
    \hfill
    \begin{minipage}{0.84\linewidth}
        \includegraphics[width=\linewidth]{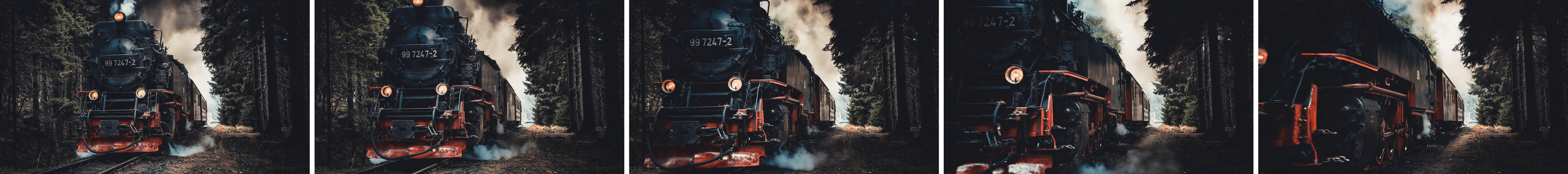}
    \end{minipage}

    \vspace{4pt} 

    \begin{minipage}{0.15\linewidth}
        \centering
        \small Dense Attention
    \end{minipage}
    \hfill
    \begin{minipage}{0.84\linewidth}
        \includegraphics[width=\linewidth]{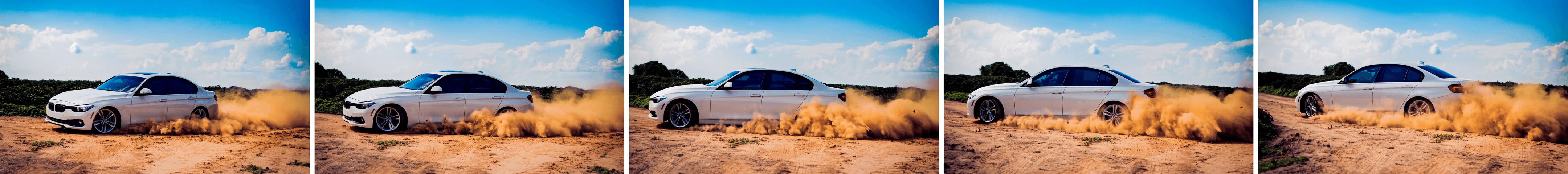}
    \end{minipage}

    \begin{minipage}{0.15\linewidth}
        \centering
        \small \methodname
    \end{minipage}
    \hfill
    \begin{minipage}{0.84\linewidth}
        \includegraphics[width=\linewidth]{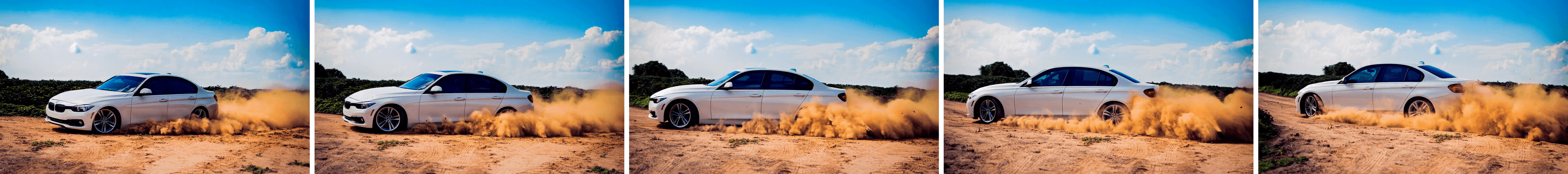}
    \end{minipage}

    \vspace{4pt}
    \begin{minipage}{0.15\linewidth}
        \centering
        \small Dense Attention
    \end{minipage}
    \hfill
    \begin{minipage}{0.84\linewidth}
        \includegraphics[width=\linewidth]{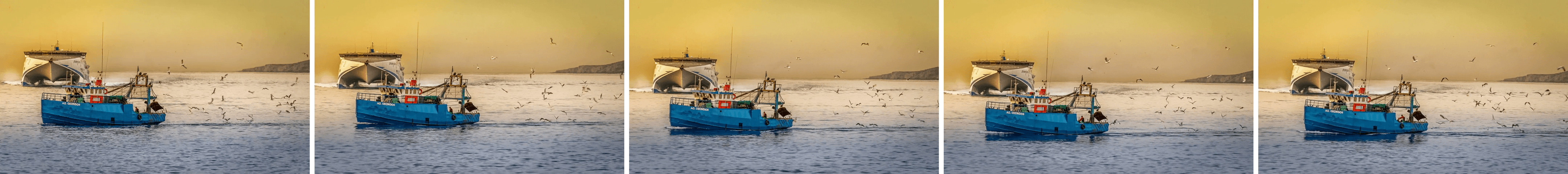}
    \end{minipage}

    \begin{minipage}{0.15\linewidth}
        \centering
        \small \methodname
    \end{minipage}
    \hfill
    \begin{minipage}{0.84\linewidth}
        \includegraphics[width=\linewidth]{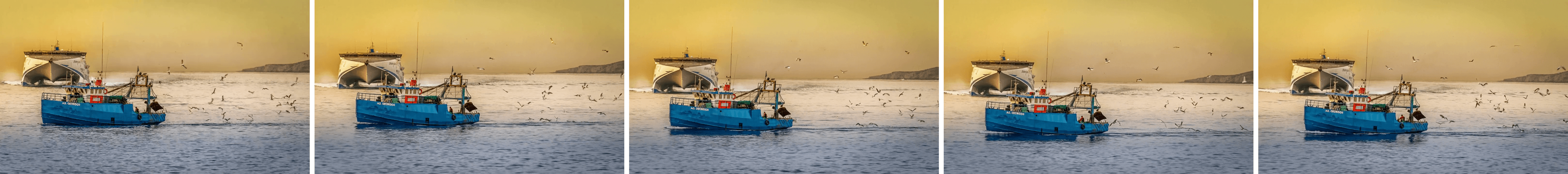}
    \end{minipage}

    \vspace{4pt} 
    \begin{minipage}{0.15\linewidth}
        \centering
        \small Dense Attention
    \end{minipage}
    \hfill
    \begin{minipage}{0.84\linewidth}
        \includegraphics[width=\linewidth]{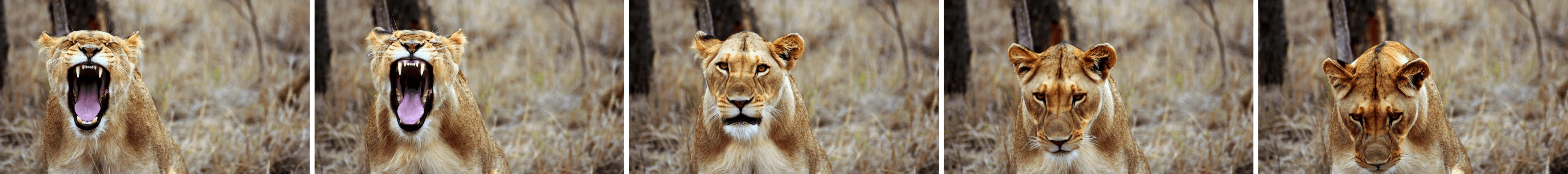}
    \end{minipage}

    \begin{minipage}{0.15\linewidth}
        \centering
        \small \methodname
    \end{minipage}
    \hfill
    \begin{minipage}{0.84\linewidth}
        \includegraphics[width=\linewidth]{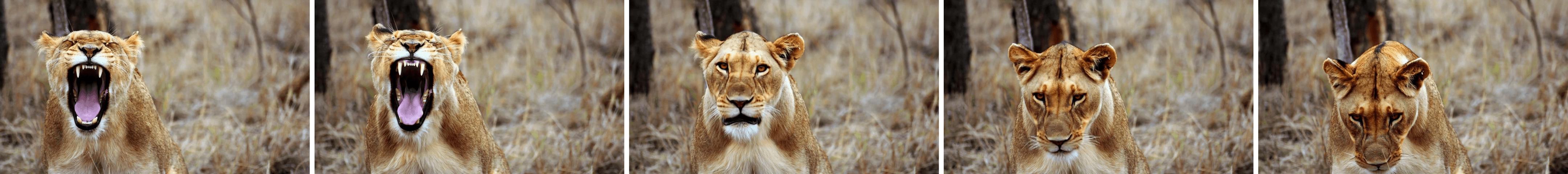}
    \end{minipage}

    \vspace{4pt} 
    \begin{minipage}{0.15\linewidth}
        \centering
        \small Dense Attention
    \end{minipage}
    \hfill
    \begin{minipage}{0.84\linewidth}
        \includegraphics[width=\linewidth]{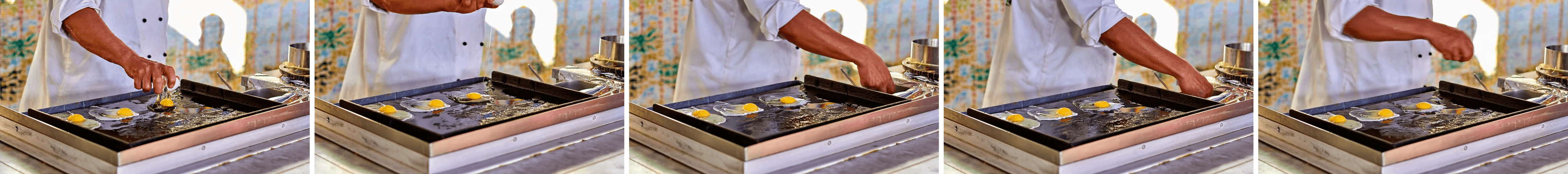}
    \end{minipage}

    \begin{minipage}{0.15\linewidth}
        \centering
        \small \methodname
    \end{minipage}
    \hfill
    \begin{minipage}{0.84\linewidth}
        \includegraphics[width=\linewidth]{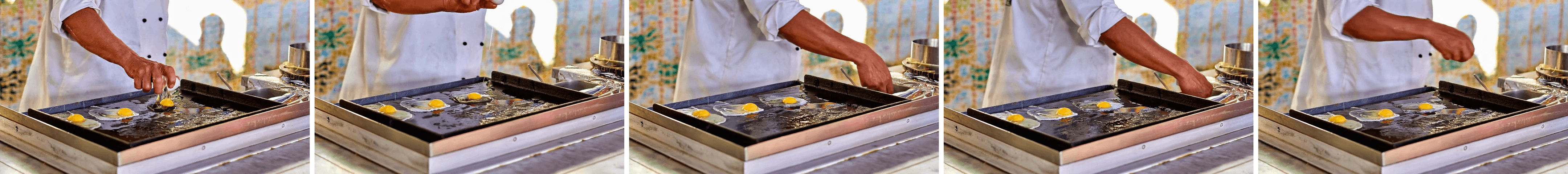}
    \end{minipage}

    \vspace{4pt} 
    \begin{minipage}{0.15\linewidth}
        \centering
        \small Dense Attention
    \end{minipage}
    \hfill
    \begin{minipage}{0.84\linewidth}
        \includegraphics[width=\linewidth]{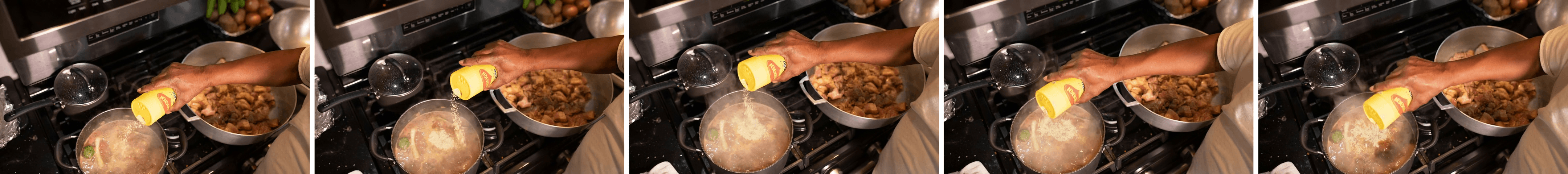}
    \end{minipage}

    \begin{minipage}{0.15\linewidth}
        \centering
        \small \methodname
    \end{minipage}
    \hfill
    \begin{minipage}{0.84\linewidth}
        \includegraphics[width=\linewidth]{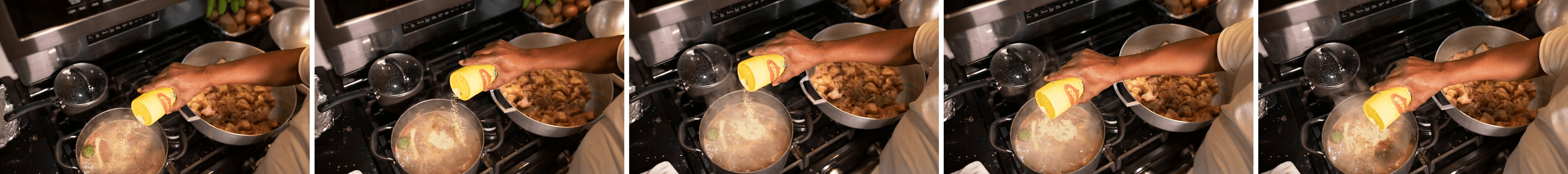}
    \end{minipage}

    \vspace{4pt} 

    \begin{minipage}{0.15\linewidth}
        \centering
        \small Dense Attention
    \end{minipage}
    \hfill
    \begin{minipage}{0.84\linewidth}
        \includegraphics[width=\linewidth]{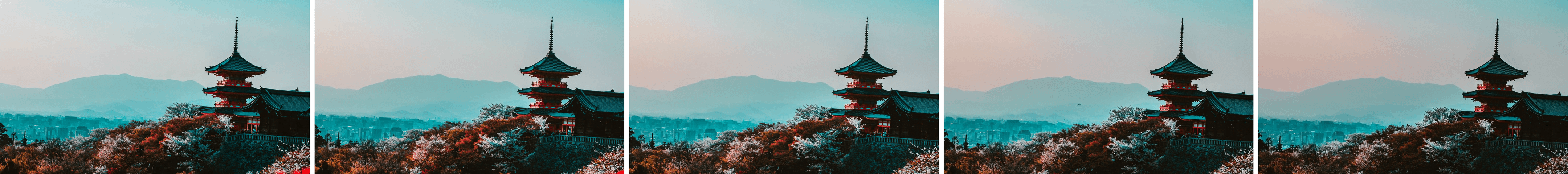}
    \end{minipage}

    \begin{minipage}{0.15\linewidth}
        \centering
        \small \methodname
    \end{minipage}
    \hfill
    \begin{minipage}{0.84\linewidth}
        \includegraphics[width=\linewidth]{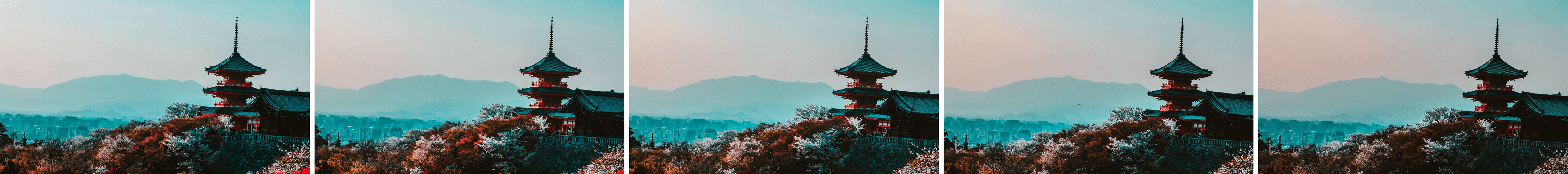}
    \end{minipage}

    \caption{Comparion of Dense Attention and \methodname on Wan 2.2 Image-to-Video generation}
    \label{fig:wan_i2v_vis}
\end{figure}

\begin{figure}[H]
    \centering
    
    \begin{minipage}{0.15\linewidth}
        \centering
        \small Dense Attention
    \end{minipage}
    \hfill
    \begin{minipage}{0.84\linewidth}
        \includegraphics[width=\linewidth]{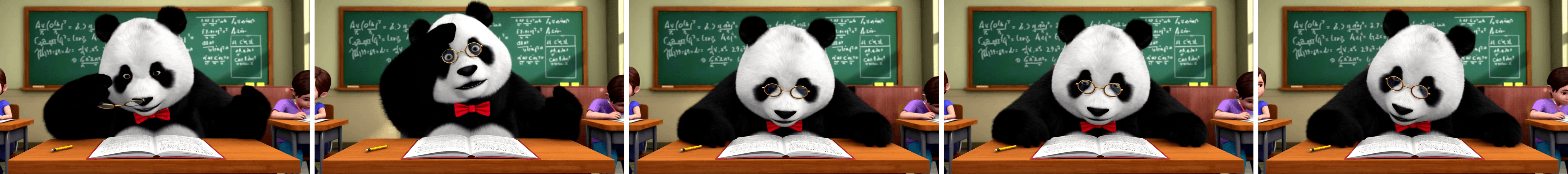}
    \end{minipage}

    \begin{minipage}{0.15\linewidth}
        \centering
        \small \methodname
    \end{minipage}
    \hfill
    \begin{minipage}{0.84\linewidth}
        \includegraphics[width=\linewidth]{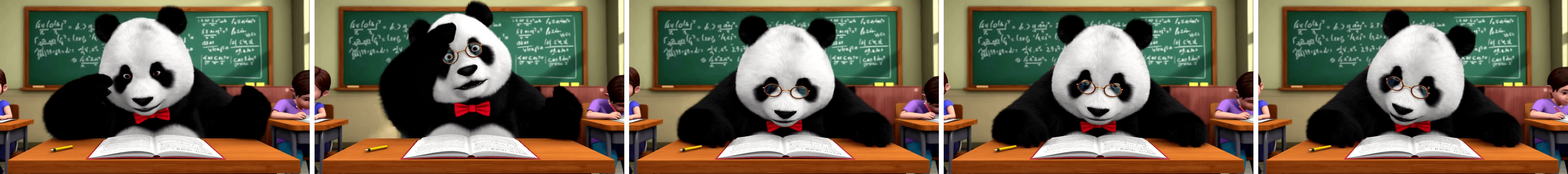}
    \end{minipage}

    \vspace{4pt} 

    \begin{minipage}{0.15\linewidth}
        \centering
        \small Dense Attention
    \end{minipage}
    \hfill
    \begin{minipage}{0.84\linewidth}
        \includegraphics[width=\linewidth]{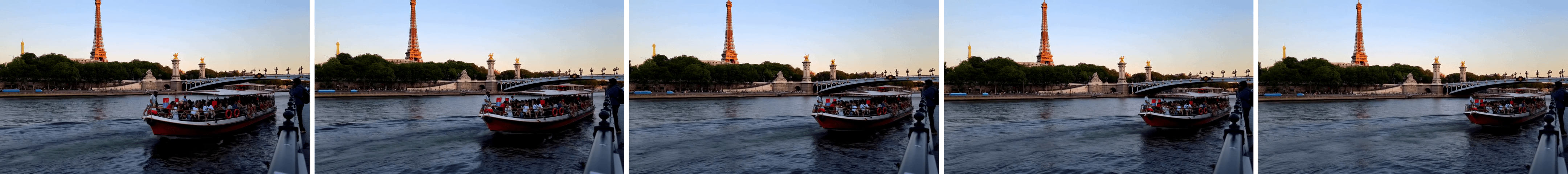}
    \end{minipage}

    \begin{minipage}{0.15\linewidth}
        \centering
        \small \methodname
    \end{minipage}
    \hfill
    \begin{minipage}{0.84\linewidth}
        \includegraphics[width=\linewidth]{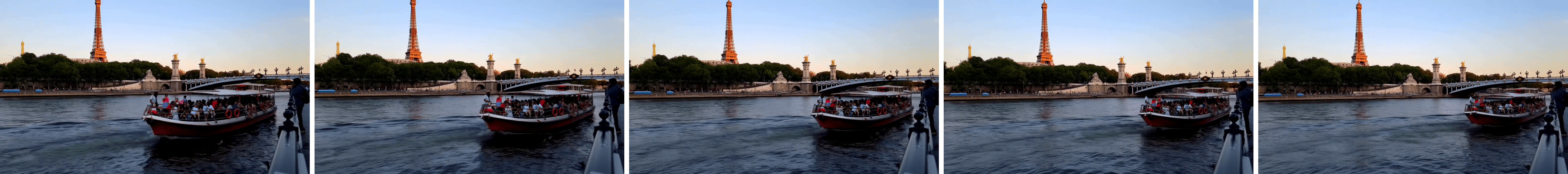}
    \end{minipage}

    \vspace{4pt}
    \begin{minipage}{0.15\linewidth}
        \centering
        \small Dense Attention
    \end{minipage}
    \hfill
    \begin{minipage}{0.84\linewidth}
        \includegraphics[width=\linewidth]{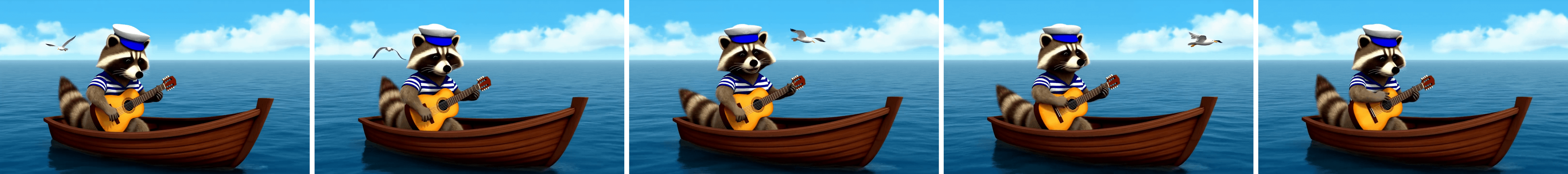}
    \end{minipage}

    \begin{minipage}{0.15\linewidth}
        \centering
        \small \methodname
    \end{minipage}
    \hfill
    \begin{minipage}{0.84\linewidth}
        \includegraphics[width=\linewidth]{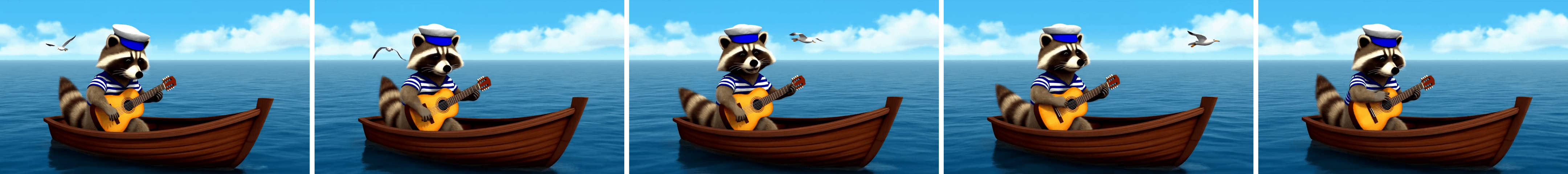}
    \end{minipage}

    \vspace{4pt} 
    \begin{minipage}{0.15\linewidth}
        \centering
        \small Dense Attention
    \end{minipage}
    \hfill
    \begin{minipage}{0.84\linewidth}
        \includegraphics[width=\linewidth]{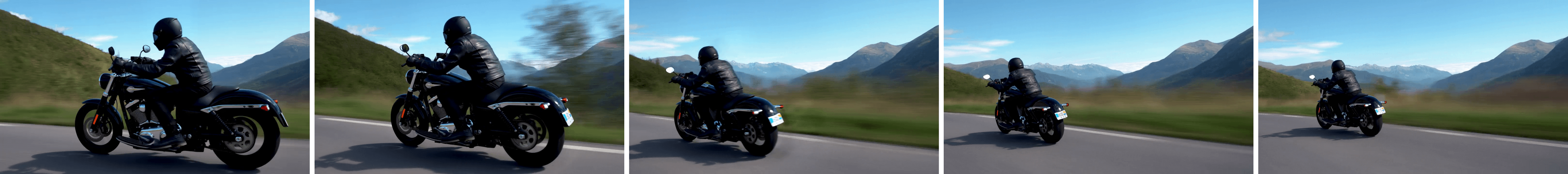}
    \end{minipage}

    \begin{minipage}{0.15\linewidth}
        \centering
        \small \methodname
    \end{minipage}
    \hfill
    \begin{minipage}{0.84\linewidth}
        \includegraphics[width=\linewidth]{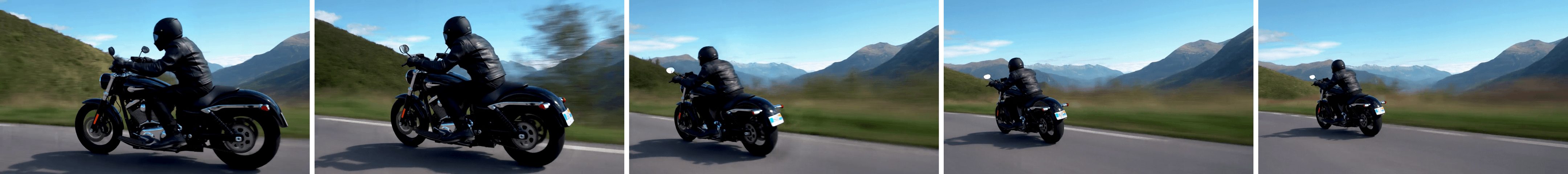}
    \end{minipage}

    \vspace{4pt} 
    \begin{minipage}{0.15\linewidth}
        \centering
        \small Dense Attention
    \end{minipage}
    \hfill
    \begin{minipage}{0.84\linewidth}
        \includegraphics[width=\linewidth]{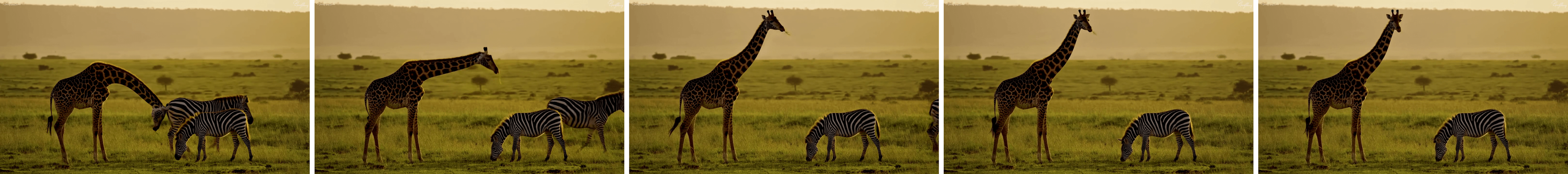}
    \end{minipage}

    \begin{minipage}{0.15\linewidth}
        \centering
        \small \methodname
    \end{minipage}
    \hfill
    \begin{minipage}{0.84\linewidth}
        \includegraphics[width=\linewidth]{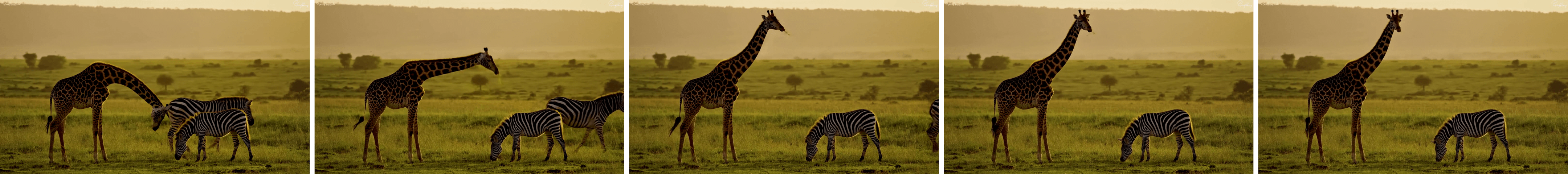}
    \end{minipage}

    \vspace{4pt} 
    \begin{minipage}{0.15\linewidth}
        \centering
        \small Dense Attention
    \end{minipage}
    \hfill
    \begin{minipage}{0.84\linewidth}
        \includegraphics[width=\linewidth]{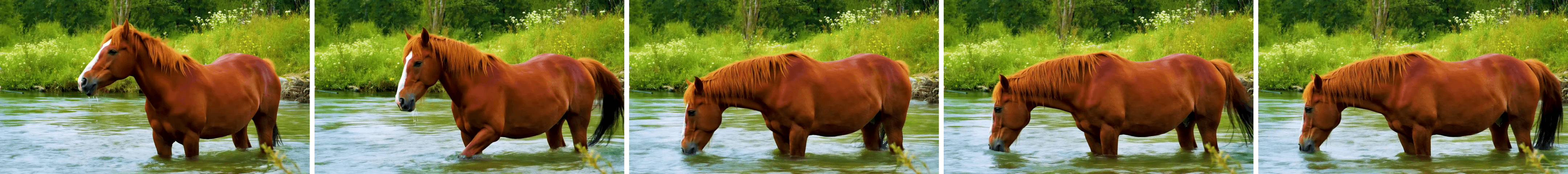}
    \end{minipage}

    \begin{minipage}{0.15\linewidth}
        \centering
        \small \methodname
    \end{minipage}
    \hfill
    \begin{minipage}{0.84\linewidth}
        \includegraphics[width=\linewidth]{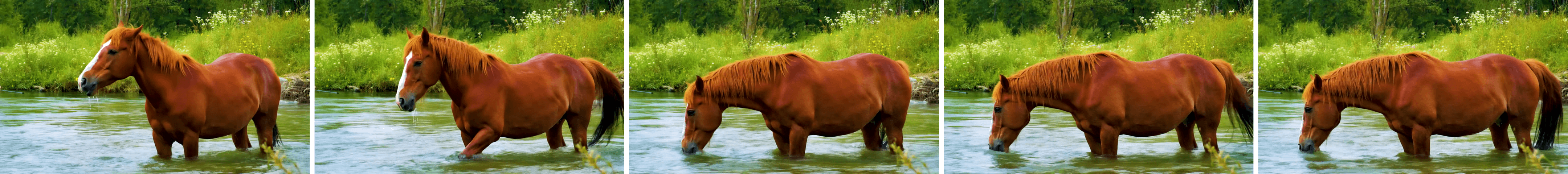}
    \end{minipage}

    \vspace{4pt} 

    \begin{minipage}{0.15\linewidth}
        \centering
        \small Dense Attention
    \end{minipage}
    \hfill
    \begin{minipage}{0.84\linewidth}
        \includegraphics[width=\linewidth]{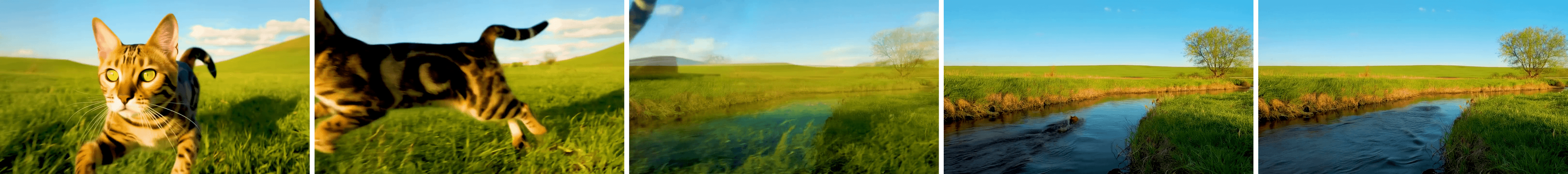}
    \end{minipage}

    \begin{minipage}{0.15\linewidth}
        \centering
        \small \methodname
    \end{minipage}
    \hfill
    \begin{minipage}{0.84\linewidth}
        \includegraphics[width=\linewidth]{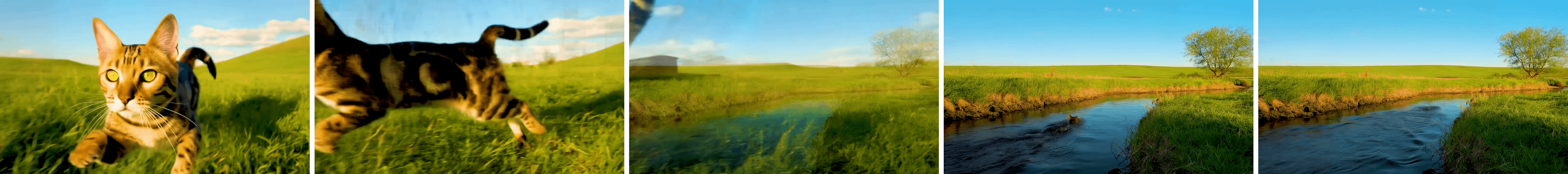}
    \end{minipage}

    \caption{Comparion of Dense Attention and \methodname on HunyuanVideo Text-to-Video generation}
    \label{fig:hyvideo_t2v_vis}
\end{figure}

\end{document}